\theoremstyle{plain}
\newtheorem{theorem}{Theorem}
\newtheorem{proposition}{Proposition}
\newtheorem{lemma}{Lemma}
\newtheorem{definition}{Definition}
\newtheorem{remark}{Remark}
\newtheorem{theoremR}{Theorem}
\newcommand{\theoremRestate}[1]{%
  \renewcommand{\thetheoremR}{\ref{#1}}%
}
\newtheorem{propT}{Proposition}
\newcommand{\propRestate}[1]{%
  \expandafter\def\csname thepropT\endcsname{\ref{#1}}%
}
\begin{document}

\title{Group-Invariant Unsupervised Skill Discovery: Symmetry-aware Skill Representations for Generalizable Behavior}

% \author{IEEE Publication Technology,~\IEEEmembership{Staff,~IEEE,}
        % <-this % stops a space
\author{%
Junwoo~Chang\orcidlink{0009-0008-5843-6890},
Joseph~Park\orcidlink{0009-0000-0765-0450},
Roberto~Horowitz\orcidlink{0000-0003-2807-0054},
Jongmin~Lee\orcidlink{0009-0008-0761-1329},
and~Jongeun~Choi\orcidlink{0000-0002-7532-5315}%
% Anonymous Authors
\thanks{(Corresponding authors: Jongeun Choi and Jongmin Lee.)}%
\thanks{Junwoo Chang, Joseph Park, and Jongeun Choi are with the School of Mechanical Engineering, 
Yonsei University, Seoul, South Korea (\texttt{e-mails: junwoochang@yonsei.ac.kr;
iamjoseph1129@yonsei.ac.kr; jongeunchoi@yonsei.ac.kr}).\\
Jongmin Lee and Jongeun Choi are with the Department of Artificial Intelligence, 
Yonsei University, Seoul, South Korea (\texttt{e-mails: jongminlee@yonsei.ac.kr;
jongeunchoi@yonsei.ac.kr}).\\
Jongeun Choi and Roberto Horowitz are with the Department of Mechanical Engineering, University of California, Berkeley, CA 94720, USA (\texttt{e-mail: horowitz@berkeley.edu})}% 
% \thanks{Manuscript received April 19, 2021; revised August 16, 2021.}
\thanks{The authors used the ChatGPT large language model (OpenAI) \cite{chatgpt} to polish texts. All texts written with LLM assistance was reviewed and verified by the authors.}
\thanks{This work has been submitted to the IEEE for possible publication. Copyright may be transferred without notice, after which this version may no longer be accessible.}
}

% The paper headers
% \markboth{Journal of \LaTeX\ Class Files,~Vol.~14, No.~8, August~2021}%
\markboth{Preprint. Under Review.}
{Shell \MakeLowercase{\textit{et al.}}: A Sample Article Using IEEEtran.cls for IEEE Journals}

% \IEEEpubid{0000--0000/00\$00.00~\copyright~2021 IEEE}
% Remember, if you use this you must call \IEEEpubidadjcol in the second
% column for its text to clear the IEEEpubid mark.

\maketitle

\begin{abstract}
Unsupervised skill discovery aims to acquire behavior primitives that improve exploration and accelerate downstream task learning. However, existing approaches often ignore the geometric symmetries of physical environments, leading to redundant behaviors and sample inefficiency. To address this, we introduce Group-Invariant Skill Discovery (GISD), a framework that explicitly embeds group structure into the skill discovery objective. Our approach is grounded in a theoretical guarantee: we prove that in group-symmetric environments, the standard Wasserstein dependency measure admits a globally optimal solution comprised of an equivariant policy and a group-invariant scoring function. Motivated by this, we formulate the Group-Invariant Wasserstein dependency measure, which restricts the optimization to this symmetry-aware subspace without loss of optimality. Practically, we parameterize the scoring function using a group Fourier representation and define the intrinsic reward via the alignment of equivariant latent features, ensuring that the discovered skills generalize systematically under group transformations. Experiments on state-based and pixel-based locomotion benchmarks demonstrate that GISD achieves broader state-space coverage and improved efficiency in downstream task learning compared to a strong baseline.
% Unsupervised skill discovery aims to acquire behavior primitives that improve exploration and accelerate downstream task learning. 
% However, existing approaches often ignore the geometric symmetries of physical environments, leading to redundant behaviors and sample inefficiency. 
% To address this, we introduce Group-Invariant Skill Discovery (GISD), a framework that explicitly embeds group structure into the skill discovery objective. 
% We theoretically derive a group-invariant Wasserstein dependency measure by restricting the function class to a symmetry-aware subspace and show that under symmetric environments, this restriction substantially reduces the search space without sacrificing optimality.
% Practically, we parameterize this function class using a group Fourier representation and define the intrinsic reward through the alignment of equivariant latent features, ensuring that the discovered skills generalize systematically under group transformations.
% Experiments on state-based and pixel-based locomotion benchmarks demonstrate that GISD achieves broader state-space coverage and improved efficiency in downstream task learning compared to the baseline.
\end{abstract}

\begin{IEEEkeywords}
Skill discovery, Group equivariance, Unsupervised reinforcement learning, Robotics
\end{IEEEkeywords}

\section{Introduction}
\IEEEPARstart{U}{n}supervised skill discovery has become a central tool for acquiring reusable behavioral primitives that can accelerate downstream task learning. A wide range of methods learn latent skills by maximizing mutual information or distance-based objectives between skills and state trajectories \cite{eysenbach2018diversity, sharma2019dynamics, laskin2022cic, park2023metra, park2022lipschitz, park2023controllability}. These methods have shown that pretraining a diverse set of behaviors can substantially improve exploration, state coverage, and downstream task learning efficiency across challenging continuous-control benchmarks.

Despite this progress, existing approaches often struggle to pretrain \emph{useful} skills efficiently. Standard distance-maximizing objectives treat the state space as largely unstructured, and must discover all behaviors directly in a high-dimensional latent space. Recent works attempt to impose additional priors to mitigate this difficulty—for example, LGSD \cite{rho2024language} leverages language guidance to bias the skill space toward semantically meaningful behaviors, while ``Do's and Don'ts'' \cite{kim2024s} leverage desired behavioral preference priors. However, these methods still overlook a fundamental property of many robotic environments: the presence of strong geometric symmetries (e.g., rotations of a locomotion agent) that make many behaviors equivalent up to a group transformation. When such a structure is ignored, skill discovery tends to learn redundant variants of the same behavior, wasting samples and weakening generalization.

To address this limitation, we introduce \textbf{Group-Invariant Skill Discovery (GISD)}, a framework that injects group symmetry directly into the skill discovery objective and representation.
Our approach builds upon a key theoretical insight: in a group-invariant environment, the Wasserstein dependency measure~\cite{ozair2019wasserstein} admits a globally optimal solution comprised of an equivariant policy and a group-invariant scoring function $f$. This existence guarantee implies that we can safely restrict the search space to a class of group-invariant scoring functions without any loss of optimality. 
Based on this principle, we formulate the Group-Invariant Wasserstein dependency measure (GIWDM), which directly optimizes the objective within this reduced subspace. 
GIWDM thus eliminates the redundancy of exploring non-symmetric solutions and ensures that the discovered skills inherently respect the environmental symmetry.
Building on this theory, we parameterize the group-invariant scoring function in a group Fourier space, discovering skills whose latent representations explicitly encode the group symmetry. Once such symmetry-aware skills are learned, we can exploit the same structure: the agent can reuse a skill at novel configurations by transforming its latent representation under the group, enabling systematic generalization across group transformations.

The contributions of our work are summarized as follows:
1)We prove that, in group-symmetric MDPs, the Wasserstein Dependency Measure (WDM) admits a globally optimal solution consisting of an equivariant policy and a group-invariant scoring function $f$. This result justifies Group-Invariant Skill Discovery (GISD), which restricts $f$
to symmetry-respecting functions without sacrificing optimality.
2) We instantiate GISD by parameterizing the group-invariant scoring function $f$ in the group Fourier domain. The intrinsic reward is defined via the alignment of equivariant Fourier features, producing skill representations that are group-symmetric by construction. We also show that this symmetric representation enables generalizable skill implementation under the group transformation in downstream tasks.
3) We evaluate GISD on state-based and pixel-based locomotion benchmarks, showing improved sample efficiency and state-space coverage over a strong distance-based baseline.

\section{Related Work}
\label{sec:rrelatedwork}

\subsection{Unsupervised Skill Discovery}
Unsupervised skill discovery seeks to learn diverse behaviors in reinforcement learning (RL) without relying on explicit reward signals and reuse them for efficient downstream task training. A common strategy is to maximize the dependency between states and skills, leading to distinct trajectories according to the different latent skill vectors. One of the classes of unsupervised skill discovery contains leveraging the mutual information (MI). MI-based approaches maximize the correlation between skill and state, typically via a Kullback-Liebler (KL)-divergence formulation \cite{eysenbach2018diversity, gregor2016variational, laskin2022cic, yang2023behavior}: 
\begin{align*}
    I(\mathcal S;Z)&=D_{KL}\left(p(s,z)\| p(s)p(z)\right)\\
    &=\int_{\mathcal S \times Z} p(s,z)\log \frac{p(s,z)}{p(s)p(z)}\, dsdz.
\end{align*}
While they effectively uncover distinct skills, their reliance on KL divergence can limit overall state coverage. 
To address the limitation of MI-based approach, distance-maximizing approaches, which is the other class in unsupervised skill discovery, replace MI with the $1$-Wasserstein distance (or Kantorovich metric) for tractability, promoting both distinctness and a broader exploration of states \cite{park2022lipschitz, park2023controllability, park2023metra, rho2024language}:
\begin{equation*}
    I_{\mathcal{W}}(\mathcal S;Z)=\mathcal{W}\big(p(s,z),p(s)p(z)\big)
\end{equation*}
These approaches map states to a compact latent space that preserves a chosen distance metric (e.g., temporal distance \cite{park2023metra}, controllability distance \cite{park2023controllability}, or language distance \cite{rho2024language}), thereby promoting both broad exploration and the discovery of skills that are well-separated under the metric. 

Based on distance-maximizing unsupervised skill discovery, recent studies have proposed methods to improve the quality of skills through the addition of prior knowledge. For example, one of the prior works, LGSD \cite{rho2024language}, utilizes language prior to improve semantic diversity between skills. In another prior work, Do's and Don'ts \cite{kim2024s}, they utilize the prior knowledge of desired state transition for feasible skill discovery. 
In this work, we also build upon distance-maximizing approaches by integrating a geometric prior, but our method differs in that it discovers skills directly from the inherent group symmetric structure of the space, and thereby, the skills are enabled to be discovered with the underlying geometric symmetry.

\subsection{Group Equivariance  in Reinforcement Learning}
Group equivariance is a mathematical property that characterizes how a function's output transforms in response to a specified group action on its input. Incorporating this property into neural networks ensures that transformations of the input data induce predictable and consistent transformations of the output \cite{fuchs2020se, thomas2018tensor, geiger2022e3nn, weiler2019general, 10466590, 10045782}. Recent works have explored group equivariance in robotic settings \cite{zeng2021transporter, huang2022equivariant, huang2024fourier, ryu2022equivariant, ryu2024diffusion, kim2023robotic, simeonov2022neural, seo2025se, park2025symmetry}: for example, some prior works \cite{zeng2021transporter, huang2022equivariant, huang2024fourier} apply group-equivariant convolutions to pick-and-place tasks, and other prior works \cite{ryu2022equivariant, ryu2024diffusion, kim2023robotic, simeonov2022neural} leverage roto-translation symmetry in three-dimensional environments, notably improving data efficiency and generalization. Such benefits are particularly relevant for RL, where sample efficiency remains a central challenge. Several studies have demonstrated that integrating group equivariance into RL algorithms yields enhanced performance and faster learning compared to standard methods \cite{wang2022mathrm, wang2022equivariant, wang2022surprising, nguyen2022leveraging, tangri2024equivariant, van2020mdp, finzi2021residual, kohler2024symmetric, wang2022robot, chang2025partial}. Notably, some works \cite{wang2022mathrm, wang2022equivariant} introduce the concept of group-invariant Markov decision process (MDPs), offering a formal basis for these improvements. Building upon these successes, our work aims to incorporate group equivariance into unsupervised skill discovery, thereby addressing limitations in existing approaches and further advancing the potential of symmetry-based methods.

\section{Preliminaries}
\label{sec:prelim}
\subsection{Distance-Maximizing Skill Discovery}\label{dmsd}
Distance-maximizing skill discovery approaches \cite{park2022lipschitz, park2023controllability, park2023metra, rho2024language, atanassov2024constrained} employ the Wasserstein dependency measure (WDM) \cite{ozair2019wasserstein}, an equivalent formulation of the 1-Wasserstein distance due to the Kantorovich-Rubinstein duality \cite{villani2009optimal}. Concretely, the objective is given by:
\begin{equation*}
    \begin{aligned}
        I_{\mathcal{W}}(\mathcal S;Z)&=W_1 (p(s,z),p(s)p(z))\\
        &=\sup_{\|f\|_{L}\leq 1}  \left (\mathbb{E}_{p(s,z)}[f(s,z)]-\mathbb{E}_{p(s)p(z)}[f(s,z)] \right )
    \end{aligned}
\end{equation*}
where $W_1$ represents $1$-Wasserstein distance and $\|\cdot\|_{L}$ indicates the 1-Lipschitz constraint. The function $f$ is parameterized as $f(s,z)=\phi(s)^\top\psi(z)$, mapping the state space $\mathcal S$ to a latent space via $\phi:\mathcal S\rightarrow \mathbb{R}^D$. In addition, by imposing additional conditions, replacing $\mathcal S$ with the terminal state $S_T$, setting $\psi(z)=z$, and sampling $z$ from a zero-mean prior $p(z)$, the objective  can be approximated through a telescoping sum:

\begin{equation}\label{metra wdm}
    I_{\mathcal{W}}(\mathcal S_T;Z)\approx \sup_{\|\phi \|_{L}\leq 1}\mathbb{E}_{p(\tau,z)}\sum^{T-1}_{t=0}(\phi(s_{t+1})-\phi(s_t))^\top z
\end{equation}
Consequently, a corresponding reward function is defined as $r_t(s_t,z,s_{t+1})=(\phi(s_{t+1})-\phi(s_t))^\top z$. The 1-Lipschitz constraint $\|\phi(s_{t+1})-\phi(s_t)\|\leq d(s_{t+1},s_t)$ preserves the distance metric in the latent space. A skill conditioned policy $\pi(s,z)$ is trained via RL to maximize the WDM objective, thereby promoting trajectories that accumulate larger defined distance in the direction of $z$.

\subsection{Group Equivariance and Representation Theory}

If a transformation leaves a property of an object or system unchanged, that transformation is called a \emph{symmetry} \cite{bronstein2021geometric}. Symmetries satisfy the axioms of associativity, identity, inverse, and closure, and the collection of all such transformations forms a \emph{group}. In our setting, the relevant symmetry is planar rotation, represented by the continuous group $SO(2)$. For practical implementation, we approximate this continuous symmetry using a discrete cyclic subgroup $C_N$, where rotations are quantized in increments of $2\pi/N$.

A \emph{group representation} $\rho : G \to GL(n)$ maps each group element $g \in G$ to an invertible $n \times n$ matrix that describes how $G$ acts on a vector space. Examples include the \emph{trivial representation} $\rho_0(g)x = x$, which leaves the space unchanged, and the \emph{regular representation} on $\mathbb{R}^{|G|}$, which can be decomposed into \emph{irreducible representations} $\rho_k$ that cannot be further reduced. For $SO(2)$, the complex irreducible representations take the form $\rho_k(\theta) = e^{ik\theta}$. In practice, we use a real-valued form by separating real and imaginary parts into a two-dimensional feature vector. Any finite-dimensional representation $\rho$ of a compact group can be written, up to a change of basis, as a direct sum of irreducible representations:
\begin{equation*}
    \rho(g)
    = Q \left( \bigoplus_{k=1}^{K} \rho_k(g) \right) Q^{-1},
\end{equation*}
for some invertible matrix $Q$.

A function $f : X \to Y$ is \emph{equivariant} with respect to a group action if applying the group before or after $f$ is equivalent, i.e.,
\[
    \rho_Y(g) f(x) = f(\rho_X(g) x), \quad \forall g \in G, x \in X.
\]
If the output does not change under the group action on the input, $f(x) = f(\rho_X(g) x)$, then $f$ is \emph{invariant}. Throughout the paper we use the shorthand $gx$ to denote the group action $\rho(g)x$ for notational simplicity.

\subsection{Group-invariant MDPs}
\label{gmdp}
A group-invariant MDP \cite{wang2022mathrm, wang2022equivariant} builds on the notion of MDP homomorphism \cite{ravindran2001symmetries, ravindran2004approximate} and provides an abstract representation of an underlying MDP that respects symmetry. Denoted as $\mathcal{M}_G(\mathcal S,\mathcal A,P,R,G)$, this formulation encodes how a symmetry group $G$ acts on states and actions. When the reward and transition dynamics satisfy the group-invariance conditions
\begin{equation*}
    R(s,a)=R(gs,ga),\quad P(s'|s,a)=P(gs'|gs,ga),
\end{equation*}
The optimal policy and optimal value functions of the original MDP can be derived directly from the  symmetry-reduced MDP. This reduction provides both computational efficiency and structural guarantees tied to the underlying group.

\section{Problem Statement}
\label{sec:problem}
In line with previous unsupervised skill discovery approaches, the problem is formalized within the framework of a reward-free MDP, denoted as $\mathcal{M}=(\mathcal S,\mathcal A,P)$. $\mathcal S$, $\mathcal A$, and $P$ represent the state space, action space, and transition probabilities $P:\mathcal S\times \mathcal A\rightarrow \mathcal S$, respectively. The latent skill vector $z\in Z$ is sampled from the prior distribution, $z\sim p(z)$. The policy is conditioned by the skill vector, $\pi(a|s,z)$, the skill vector informing the policy to select the corresponding action sequence. During the training, the skill vector doesn't change during the episode, and the policy is trained without any reward supervision, only with the intrinsic reward.
In this framework, we leverage the group symmetric prior, prior knowledge of rotation symmetry, to enhance the sample efficiency and generalizability. We convert the original MDPs to group-invariant MDPs by assuming the group-invariance of the intrinsic reward function and transition probability. 

\begin{figure}[!t]
    \centering
    \includegraphics[width=\columnwidth]{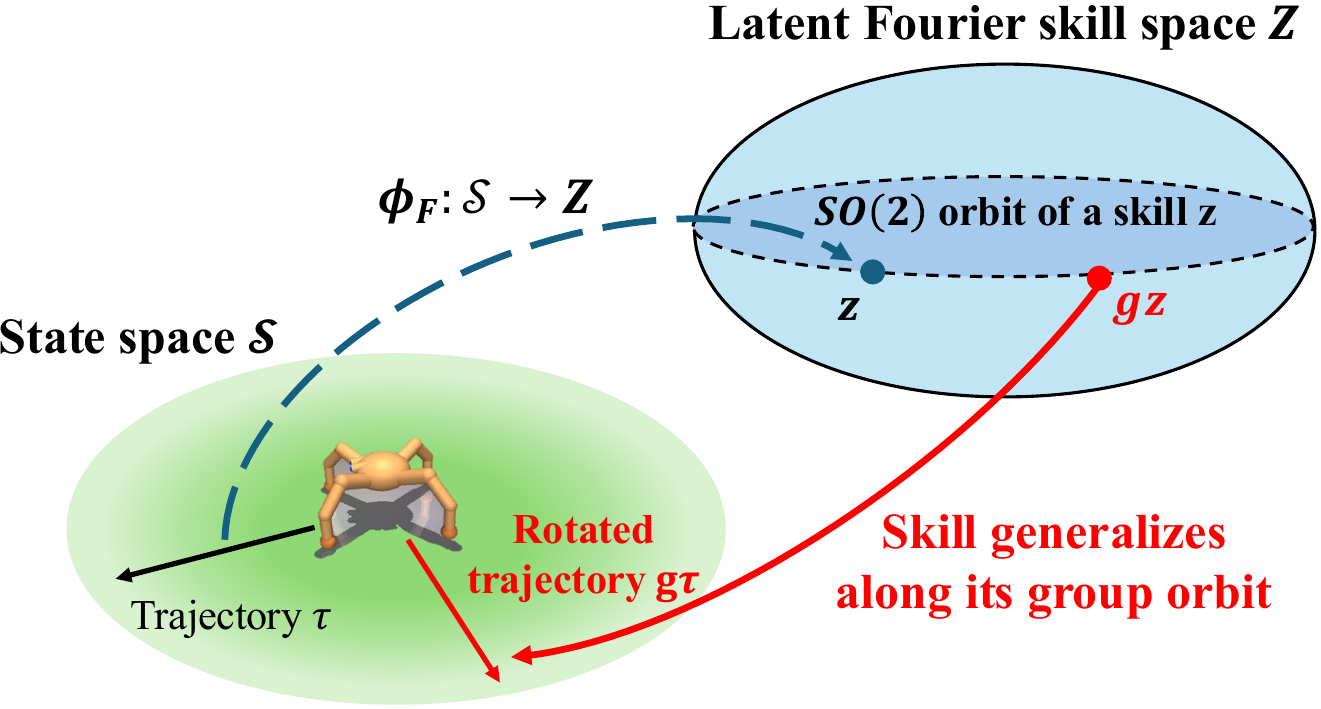}
    \caption{\textbf{Overview of Group-Invariant Skill Discovery (GISD).} Our method learns an equivariant mapping $\phi_F$ from the state space $\mathcal S$ to the \textbf{Group Fourier space} $Z$. By aligning state transitions with latent skill vectors $z$ in the group Fourier domain, the discovered skills inherently respect the underlying geometric symmetry. This structure enables generalization: a policy trained for a specific trajectory $\tau$ automatically generalizes to any group-transformed trajectory $g\tau$ by simply shifting the skill vector along its group orbit to $gz$.}
    \label{fig:overview}
\end{figure}

\section{Methods}
In this section, we present Group-Invariant Skill Discovery (GISD), a method that leverages the symmetry of the underlying dynamics to accelerate unsupervised skill discovery. We begin by showing that, in a group-invariant environment, the standard Wasserstein dependency measure (WDM) objective already admits symmetry-respecting optimal solutions. Motivated by this result, we then introduce a Group-Invariant WDM formulation and a practical Fourier-space parameterization and intrinsic reward.

\subsection{Theoretical Analysis: Symmetry of Optimal Solutions}
\label{sec:theory}
Before introducing GISD, we analyze the structure of optimal solutions to the standard WDM-based skill discovery problem. We show that, under group-invariant dynamics, restricting the search to symmetry-aware policies does not compromise optimality.

\begin{theorem}[Existence of Equivariant Optima]
    \label{thm:eq-opt}
    In an MDP where the dynamics, initial state distribution, skill prior, and ground metric are all group-invariant, the Wasserstein dependency measure admits a globally optimal solution $(\bar \pi, \bar f)$ such that
    \[
    \bar \pi(ga\mid gs, gz)=\bar \pi(a\mid s,z), \quad \bar f(gs,gz)=\bar f(s,z), \quad \forall g\in G.
\]
    In other words, among all WDM global maximizers, there exists at least one policy-function pair with an equivariant policy and a group-invariant $1$-Lipschitz function.
\end{theorem}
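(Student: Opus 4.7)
The plan is a direct symmetrization argument: starting from any WDM optimum $(\pi^*,f^*)$ with value $V^*$, I would construct an equivariant/invariant pair that still realizes $V^*$. My first step is to establish the joint $G$-invariance of the objective
\[
J(\pi,f) := \mathbb{E}_{p^\pi(s,z)}[f(s,z)] - \mathbb{E}_{p^\pi(s)p(z)}[f(s,z)],
\]
by showing $J(g\pi, gf) = J(\pi,f)$, where $(g\pi)(a\mid s,z) := \pi(g^{-1}a\mid g^{-1}s, g^{-1}z)$ and $(gf)(s,z) := f(g^{-1}s, g^{-1}z)$. The argument is a change of variables that invokes each of the four invariance hypotheses in turn: the invariant initial distribution and transition kernel give $p^{g\pi}(\tau,z) = p^\pi(g^{-1}\tau, g^{-1}z)$; the invariant skill prior transfers this to the joint $(\tau,z)$ distribution; and the invariant ground metric preserves the $1$-Lipschitz constraint, $\|gf\|_L = \|f\|_L$. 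Consequently the entire orbit $\{(g\pi^*, gf^*) : g\in G\}$ consists of global optima.

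Next I would symmetrize via Reynolds averaging. Define
\[
\bar f(s,z) := \frac{1}{|G|}\sum_{g \in G} f^*(g^{-1}s, g^{-1}z), \quad \bar\pi(a\mid s,z) := \frac{1}{|G|}\sum_{g\in G} \pi^*(g^{-1}a\mid g^{-1}s, g^{-1}z).
\]
A one-line reindexing ($h \mapsto g_0^{-1}h$) shows $\bar f(gs,gz) = \bar f(s,z)$ and $\bar\pi(ga\mid gs, gz) = \bar\pi(a\mid s,z)$, so $\bar\pi$ is equivariant and $\bar f$ is $G$-invariant; the triangle inequality together with metric invariance gives $\|\bar f\|_L \le 1$.

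The main obstacle, and the final step of the plan, is to verify $J(\bar\pi, \bar f) = V^*$. I would decompose $V^* = \sup_{f} \sup_{\pi} J(\pi, f)$ and combine two ingredients. First, for any $G$-invariant $f$, the inner problem reduces to an RL problem whose effective reward is $G$-invariant on a group-invariant MDP, so by the MDP-homomorphism result in Section \ref{gmdp} (\cite{wang2022equivariant}) it admits an equivariant optimal policy; I would take $\bar\pi$ to be this optimal response to $\bar f$. Second, for an equivariant $\bar\pi$ the induced measures $p^{\bar\pi}(s,z)$ and $p^{\bar\pi}(s)p(z)$ are both $G$-invariant under the diagonal action, so by Kantorovich--Rubinstein duality the supremum over $1$-Lipschitz potentials can be restricted to $G$-invariant potentials without loss: for any admissible $f$, its Reynolds average yields the same integral against either invariant measure. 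Chaining this reduction with the orbit-of-optima observation from the first step produces $J(\bar\pi, \bar f) = V^*$. The delicate point is precisely this last equality: a naive Jensen-type averaging of the induced distributions gives the \emph{wrong}-direction inequality because $W_1$ is jointly convex in its arguments, so the argument must be carried out at the dual (potential) level rather than by averaging transport plans.
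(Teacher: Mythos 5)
Your Step~1 (the orbit $\{(g\pi^*,gf^*)\}$ consists of global optima) matches the paper's first step and is sound, and the second ingredient in your final paragraph---that for a \emph{fixed equivariant} policy the supremum over potentials can be restricted to $G$-invariant $f$ by Reynolds averaging, since $J(\pi,\cdot)$ is linear in $f$ and constant on the orbit of an optimal potential---is exactly the paper's Step~3. The gap is in how you produce an equivariant policy that still attains $V^*$. Averaging the policy itself, $\bar\pi=\frac{1}{|G|}\sum_g \pi^*_g$, does not work: the map $\pi\mapsto p^\pi$ is not affine (a per-step mixture of policies is not the mixture of their trajectory distributions), so $J(\bar\pi,f)$ is not the average of the $J(\pi^*_g,f)$, and optimality of $\bar\pi$ does not follow from optimality of each $\pi^*_g$. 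The paper explicitly flags this (``averaging policies directly is non-trivial'') and instead averages the \emph{occupancy measures} $\rho^{\pi^*_g}$, over which $J(\cdot,f)$ is affine, then reads a policy off the invariant averaged occupancy.

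Your attempted repair---redefining $\bar\pi$ as the equivariant best response to $\bar f$ guaranteed by the group-invariant-MDP result---does not close the loop either, because it presupposes $\sup_\pi J(\pi,\bar f)=V^*$. But $F(f):=\sup_\pi J(\pi,f)$ is a supremum of affine functions of $f$, hence convex, and is merely constant on the orbit $\{f^*_g\}$; convexity therefore yields only $F(\bar f)\le V^*$, which is precisely the wrong-direction inequality you correctly flag for the primal averaging, reappearing on the dual side. In short, linearity rescues the symmetrization of $f$ once an equivariant optimal policy is in hand, but it does not rescue the symmetrization of the policy (nor of $f$ before the policy is fixed); showing that some equivariant policy attains the unrestricted optimum is the crux of the theorem, and your chain asserts it rather than proves it. To complete the argument along the paper's lines you would need to perform the averaging in occupancy-measure space and supply the justification that the averaged occupancy remains optimal there.
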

For proof, see Appendix~\ref{app:eq-opt}.
This theoretical guarantee implies that the exploration of non-equivariant policies—which correspond to redundant or suboptimal solutions in a symmetric environment—is unnecessary. This justifies restricting the search to the symmetry-aware function class without sacrificing global optimality.

\subsection{Group-Invariant Wasserstein Dependency Measure}
Theorem~\ref{thm:eq-opt} shows that in a group-invariant environment, the standard Wasserstein dependency measure (WDM) admits at least one global optimum where the policy is equivariant and the scoring function $f$ is group-invariant. This motivates explicitly restricting the scoring function class to symmetry-respecting functions: instead of hoping that symmetry emerges implicitly, we bake it into the objective.

Let $G$ act on the state space $\mathcal S$ and skill space $Z$, and let $d$ be a distance metric on $\mathcal S \times Z$ that is group-invariant under the joint action:
\[
d((gs,gz),(gs',gz')) = d((s,z),(s',z')),\quad \forall g\in G.
\]
We define the group-invariant scoring function class
$\mathcal{F}_{G}$ as the set of 1-Lipschitz functions $f:\mathcal S\times Z\to \mathbb R$ that are
invariant under the joint action of $G$:
\[
\mathcal{F}_{G} \\
:= \bigl\{ f
\;\big|\;
\|f\|_{L} \le 1,\; f(gs,gz) = f(s,z),\; \forall g\in G \bigr\}.
\]

\begin{definition}[Group-Invariant WDM]
    \label{def:gwdm}
    Given a group-invariant metric $d$ on $S\times Z$, the
    \emph{group-invariant} Wasserstein dependency measure is defined as
    \begin{equation*}
    I_{\mathcal W}^{G}(\mathcal S;Z)
    := \sup_{f \in \mathcal{F}_{G}}
      \mathbb{E}_{p(s,z)}[f(s,z)]
     - \mathbb{E}_{p(s)p(z)}[f(s,z)]).
    \end{equation*}
\end{definition}

This constrained objective guarantees invariance under the group action,
a property not generally enjoyed by the standard WDM.

\begin{proposition}
    \label{prop:gwdm}
    If the distance metric $d$ is group-invariant, then the
    group-invariant WDM satisfies
    \[
    I_{\mathcal W}^{G}(g\mathcal S;gZ) = I_{\mathcal W}^{G}(\mathcal S;Z)
    \]
    for all $g\in G$.
\end{proposition}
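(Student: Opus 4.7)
My plan is to prove the proposition by a direct change-of-variables argument, showing that both the objective and the admissible function class $\mathcal{F}_G$ are invariant under the action of $g$.

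First, I would fix the interpretation of $I_{\mathcal W}^{G}(g\mathcal S; gZ)$: it is the supremum of Definition~\ref{def:gwdm} evaluated with respect to the pushforward distributions $p_g(s',z') := p(g^{-1}s', g^{-1}z')$ and their marginals on the transformed domain $g\mathcal S \times gZ$. The strategy is then to substitute $s' = gs$, $z' = gz$ in each of the two expectations. For the joint term I would write
\[
\mathbb{E}_{p_g(s',z')}[f(s',z')] = \mathbb{E}_{p(s,z)}[f(gs,gz)],
\]
and analogously for the product-of-marginals term. Because every $f \in \mathcal{F}_G$ satisfies $f(gs,gz) = f(s,z)$, both right-hand sides collapse to the expectations appearing in $I_{\mathcal W}^{G}(\mathcal S;Z)$.

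Next, I would verify that the feasible set $\mathcal{F}_G$ does not depend on which transformed pair $(g\mathcal S, gZ)$ we consider. The invariance constraint $f(gs,gz)=f(s,z)$ is manifestly a condition on $f$ itself, independent of the domain labeling. The 1-Lipschitz constraint is the part that uses the hypothesis: since $d$ is group-invariant, the Lipschitz seminorm of $f$ computed on $g\mathcal S\times gZ$ with respect to $d$ equals the Lipschitz seminorm on $\mathcal S\times Z$. Hence $\mathcal{F}_G$ is literally the same set in both suprema, and the suprema agree.

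Putting these two pieces together, the supremum defining $I_{\mathcal W}^{G}(g\mathcal S;gZ)$ is taken over the same function class and produces the same numerical value as the one defining $I_{\mathcal W}^{G}(\mathcal S;Z)$, which yields the claimed equality for all $g\in G$. The only non-routine step is the second paragraph's use of the group-invariance of the metric $d$ to guarantee that the Lipschitz constraint is preserved under the $g$-action; without this hypothesis one could imagine admissible functions on $\mathcal S\times Z$ that become non-Lipschitz on $g\mathcal S\times gZ$, and the equality would fail. Since that hypothesis is assumed, I do not foresee any further obstacle, and the proof reduces to the clean change of variables sketched above.
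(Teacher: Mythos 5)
Your proposal is correct and follows essentially the same route as the paper's proof: a change of variables $s'=gs$, $z'=gz$ in both expectations, followed by the observation that $f(gs,gz)=f(s,z)$ for every $f\in\mathcal{F}_G$ collapses the transformed objective onto the original one. Your second paragraph, making explicit that the group-invariance of $d$ is what keeps the feasible set $\mathcal{F}_G$ unchanged under the $g$-action, is a point the paper leaves implicit but is a welcome clarification rather than a different argument.
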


Proof is provided in Appendix~\ref{app:gwdm}.\\

\textbf{Reduction of the search space.}\quad
Compared to the standard WDM, which optimizes over all 1-Lipschitz functions, $I^G_{\mathcal W}$ restricts the scoring function class to the group-invariant subset $\mathcal F_G$. Based on
Theorem~\ref{thm:eq-opt}, this restriction does not eliminate any globally optimal symmetric solution, but acts as a geometric inductive bias: it rules out functions that encode spurious dependencies and focuses optimization on symmetry-consistent structure. In the following subsections, we show how to construct such group-invariant scoring functions via group averaging and how to parameterize them efficiently in Fourier space.

\subsection{Group-Averaged Scoring Functions}
The group-invariant WDM in Definition~\ref{def:gwdm} restricts the
admissible scoring functions (the 1-Lipschitz functions in the dual formulation)
to the invariant class $\mathcal{F}_{G}$. We now construct such
group-invariant scoring functions from arbitrary admissible $1$-Lipschitz
functions via a standard group averaging operation.

\begin{definition}[Group-averaged Scoring Function]
\label{def:group-avg}
For any $1$-Lipschitz function $f : \mathcal S \times Z \to \mathbb R$
(with respect to the metric $d$), we define its group average as
\begin{equation*}
    \tilde{f}(s,z) := \int_G f(gs,gz)\, d\mu(g),
\end{equation*}
where $\mu$ is the normalized Haar measure on $G$
(Appendix~\ref{app:haar}).
\end{definition}

This operator maps a general admissible scoring function $f$ to a group-invariant
function while preserving the $1$-Lipschitz constant:

\begin{proposition}[Properties of the group-averaged scoring function]
\label{prop:group-avg}
Let $f:\mathcal S\times Z\to \mathbb R$ be $1$-Lipschitz with respect to
a group-invariant metric $d$ on $\mathcal S\times Z$. Then its group
average $\tilde f$ is group-invariant and $1$-Lipschitz with respect to
$d$. In particular, $\tilde f \in \mathcal{F}_{G}$.
\end{proposition}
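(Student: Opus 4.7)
The plan is to verify the two claimed properties of $\tilde f$ separately, each by a short direct calculation that exploits the defining properties of the Haar measure and the group-invariance of the metric $d$. Both steps are standard for group-averaging arguments, so I expect neither to be a genuine obstacle; the main subtlety is just keeping track of where left-invariance of $\mu$ versus invariance of $d$ is used.

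First I would establish group-invariance of $\tilde f$. For any $h\in G$, I would write
\[
\tilde f(hs, hz) = \int_G f(g\cdot hs, g\cdot hz)\, d\mu(g) = \int_G f((gh)s, (gh)z)\, d\mu(g),
\]
and then apply the left-invariance of the normalized Haar measure (substitute $g' = gh$, using $d\mu(g'h^{-1})=d\mu(g')$) to conclude that the integral equals $\tilde f(s,z)$. This gives $\tilde f(hs,hz)=\tilde f(s,z)$ for every $h\in G$, which is exactly the invariance condition defining membership in $\mathcal F_G$.

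Next I would verify that $\tilde f$ is $1$-Lipschitz with respect to $d$. Starting from the definition and pulling the absolute value inside the integral, I would bound
\[
|\tilde f(s,z)-\tilde f(s',z')| \le \int_G |f(gs,gz)-f(gs',gz')|\, d\mu(g).
\]
Since $f$ is $1$-Lipschitz, each integrand is at most $d((gs,gz),(gs',gz'))$, and because $d$ is group-invariant under the joint action, this equals $d((s,z),(s',z'))$, a constant in $g$. The normalization $\mu(G)=1$ then collapses the integral to $d((s,z),(s',z'))$, giving the Lipschitz bound.

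Combining the two properties shows $\tilde f \in \mathcal F_G$ as defined in the previous subsection. The only step that could require care is the use of left-invariance of $\mu$ in the first part: I would briefly note (or defer to the Haar-measure appendix) that the normalized Haar measure on a compact group $G$ is both left- and right-invariant, so the substitution $g\mapsto gh$ is legitimate and preserves the integral. Everything else is a one-line application of the triangle inequality, the hypothesis on $f$, and the invariance of $d$.
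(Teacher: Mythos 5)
Your proposal is correct and follows essentially the same two-step argument as the paper: a change of variables under the Haar measure for invariance, followed by pulling the absolute value inside the integral and invoking the $1$-Lipschitz property of $f$ together with the invariance of $d$. Your explicit remark that the substitution $g\mapsto gh$ relies on the bi-invariance of the normalized Haar measure on a compact group is a point the paper leaves implicit, but the route is the same.
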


We provide the proof in Appendix~\ref{app:group-avg}.
The group averaging operator provides the precise link between the standard and group-invariant WDM: given any optimal scoring function $f^*$ in the unconstrained class, its group average $\tilde{f}^*$ is feasible in $\mathcal{F}_G$ (Proposition~\ref{prop:group-avg}) and, assuming an equivariant optimal policy, achieves the same objective value. Thus, for symmetric optima, the restricted and unconstrained WDMs coincide.

% \begin{theorem}[Equivalence under Symmetry]
% \label{thm:consistency}
% Let $I_{\mathcal{W}}(\mathcal S;Z)$ denote the standard (unconstrained)
% Wasserstein dependency measure, and let $I_{\mathcal W}^{G}(\mathcal S;Z)$ be
% the WDM restricted to the invariant function class $\mathcal F_G$
% (Definition~\ref{def:gwdm}). Assume the joint distribution $p(s,z)$ is
% group-invariant, and that the group action on $\mathcal S$ and $Z$ is
% measure-preserving. Then the marginals $p(s)$ and $p(z)$ are also
% $G$-invariant, and
% \[
% I_{\mathcal W}^{G}(S;Z) = I_{\mathcal{W}}(S;Z).
% \]
% \end{theorem}
% The proof is provided in Appendix \ref{app:wdm-equiv}. This theorem suggests that if we can induce a symmetric data distribution during training, our group-invariant objective is exact.

% % \begin{theorem}[Equivalence of Group-Averaged Objective]
% %     The group average of the Kantorovich Potential $\tilde{f}(s,z)$ is group-invariant and 1-Lipschitz. Furthermore, its WDM is equivalent to the WDM with the original $f(s,z)$.
% % \end{theorem} 
% % We provide the proof in Appendix \ref{group average f}. Thus, by reformulating the problem in terms of  group-invariant MDPs and replacing $f$ with $\tilde{f}$, we obtain a fully group-invariant WDM formulation for skill discovery.

\subsection{Skill Discovery in the Group Fourier Space}
\label{sd_fourier}
In contrast to the decompositions of $f(s,z)$ used in prior works
\cite{park2022lipschitz, park2023controllability, park2023metra}, we
derive our parameterization from a group representation-theoretic
viewpoint. Assume a compact group $G$ acts on the state space
$\mathcal S$ and latent skill space $Z$. We consider the induced action
of the product group $G\times G$ on $\mathcal S\times Z$, given by
$(g_1, g_2)\cdot (s,z)=(g_1s, g_2z)$, and define
\begin{equation*}
    f_{g_1, g_2}(s,z):=f(g_1s, g_2z).
\end{equation*}
For each fixed $(s,z)$, the map $(g_1, g_2)\mapsto f_{g_1, g_2}(s,z)$
is square-integrable on $G\times G$, i.e., $f_{(\cdot, \cdot)}\in
L^2(G\times G)$. By the Peter--Weyl theorem
\cite{ceccherini2008harmonic}, this function admits an expansion in
terms of the irreducible representations of $G\times G$
(see Appendix~\ref{fourier} for details):
\begin{equation}
\label{f_decom}
f_{g_1,g_2}(s,z)
= \sum_{\rho,\sigma \in \hat G} d_\rho d_\sigma\,
    \mathrm{Tr}\!\big(A_{\rho,\sigma}(s,z)\,
    (\rho(g_1)\otimes \sigma(g_2))\big),
\end{equation}
where $\hat{G}$ denotes the set of irreducible representations of $G$, $d_\rho$ and
$d_\sigma$ are their dimensions, and $A_{\rho,\sigma}(s,z)$ are
coefficient matrices depending on $(s,z)$.

To enforce the group-invariance required in
Definition~\ref{def:gwdm}, we apply the group average over the diagonal
action $g\mapsto (g,g)$:
\[
\tilde f(s,z)
= \int_G f_{g,g}(s,z)\, d\mu(g)
= \int_G f(gs,gz)\, d\mu(g).
\]
Substituting Eq.~(\ref{f_decom}), we utilize Schur's Lemma
\cite{dummit_foote} and the orthogonality relations of irreducible representations:
\[
\int_G\rho(g)\otimes \sigma(g)\ d \mu(g)\neq 0 \Leftrightarrow \sigma\cong \rho^*.
\]
This averaging operation annihilates all non-matching components ($\rho\neq \sigma^*$) and projects the remaining terms onto the group-invariant scalar subspace. Consequently, the double sum collapses into a single sum over the spectrum:
\begin{equation*} 
\tilde f(s,z) = \sum_{\rho \in \hat G} \lambda_{\rho}(s,z), 
\end{equation*}
where each $\lambda_\rho (s,z)$ represents the scalar contraction of the matching representation blocks. 

We now parameterize these scalar coefficients in terms of
\emph{Fourier-space features} of $s$ and $z$. 
For each surviving irrep $\rho$, we introduce feature vectors
$\phi_{\rho}(s)$ and $\psi_{\rho}(z)$ in the corresponding representation
spaces and parameterize
\begin{equation*}
    \lambda_{\rho}(s,z)
    \approx \big\langle \phi_{\rho}(s),\, \psi_{\rho}(z)\big\rangle.
\end{equation*}
Stacking all irrep components into a single global feature vector, we obtain
the maps
\[
\phi_F : \mathcal{S} \to \mathbb{R}^d,
\quad
\psi_F : Z \to \mathbb{R}^d,
\]
and the group-averaged scoring function takes the inner-product form:
\begin{equation*} 
\tilde{f}(s,z) \approx \big\langle \phi_F(s), \psi_F(z) \big\rangle. 
\end{equation*}
By construction, $\phi_F$ maps states into the spectral domain and satisfies the \emph{equivariance} property:
\begin{equation}
\label{equiv_prop} 
\phi_F(gs) = \rho_{F}(g)\phi_F(s), 
\end{equation}
where $\rho_{F}(g)$ denotes the block-diagonal direct sum of unitary irreducible representations acting on the feature vector. Following METRA \cite{park2023metra}, we
set $\psi_F(z)=z$, treating the latent skill $z$ as a
unit-norm vector in   Fourier space. This choice, together with a 1-Lipschitz parameterization of $\phi_F$,
is compatible with the $1$-Lipschitz constraint required by the WDM
objective.\\

\textbf{Frequency Filtering and Interpretability.}
A key advantage of working in Fourier space is the explicit separation
of geometric frequencies. The feature vector $\phi_F(s)$ is a
concatenation of coefficients corresponding to different irreducible representations (frequencies). Unlike unstructured latent spaces, this allows us to manually emphasize or suppress specific symmetries by applying a frequency mask $M$:
\begin{equation*}
    \tilde f(s,z)=\big\langle M \cdot \phi_F(s),\, z\big\rangle.
\end{equation*}
For example, by selecting only the trivial representation ($\rho_0$), we
can discover purely group-invariant skills. By selecting higher-order frequencies, we encourage skills that are sensitive to the group action. This provides a degree of interpretability and controllability unavailable in standard unsupervised skill discovery.

\subsection{Group-Invariant MDP for Skill Discovery}
\label{sec:induced_mdp}

Using the group-invariant scoring function parameterization from
Section~\ref{sd_fourier}, we now derive the intrinsic reward used for
skill discovery and show that it is itself group-invariant.

Recall that METRA \cite{park2023metra} optimizes a WDM-style objective
over trajectories (Eq.~\eqref{metra wdm}).
In our framework, we restrict to group-invariant scoring functions and
parameterize the group-averaged scoring function as an inner product in Fourier
space,
\[
\tilde f(s,z) \approx \big\langle \phi_F(s), z \big\rangle,
\]
where $\phi_F$ is equivariant with respect to the group action
(Section~\ref{sd_fourier}). Substituting this parameterization into
Eq.~\eqref{metra wdm} leads to
\begin{equation}
\begin{aligned}
\label{eq:objective-fourier}
I_{\mathcal W}(\mathcal S_T; Z)
&\approx
\sup_{\|\phi_F\|_{L}\leq 1}
\sum_{t=0}^{T-1}
\Big(
    \mathbb{E}_{\tau,z}
    \big[
        \langle \phi_F(s_{t+1}) - \phi_F(s_t), z \rangle
    \big] \\
&\quad
    - \mathbb{E}_{\tau}
    \big[
        \langle \phi_F(s_{t+1}) - \phi_F(s_t), \mathbb{E}_{z}[z] \rangle
    \big]
\Big).
\end{aligned}
\end{equation}
As in METRA, we choose $p(z)$ to have zero mean $\mathbb{E}_{p(z)}[z] = 0$, in which case the second term vanishes and
\eqref{eq:objective-fourier} reduces to
\begin{equation}
\label{eq:objective-simplified}
I_{\mathcal W}(\mathcal S_T; Z)
\approx
\sup_{\|\phi_F\|_{L}\leq 1}
\mathbb{E}_{\tau,z}
\Bigg[
    \sum_{t=0}^{T-1}
    \langle \phi_F(s_{t+1}) - \phi_F(s_t), z \rangle
\Bigg].
\end{equation}

Because the sum in \eqref{eq:objective-simplified} is telescoping, it is
natural to interpret the incremental term as an intrinsic reward:
\begin{equation}
\label{eq:reward-def}
    r(s_t, z, s_{t+1})
    := \big\langle \phi_F(s_{t+1}) - \phi_F(s_t),\, z \big\rangle.
\end{equation}
We now show that this reward is group-invariant. Let $h \in G$ act on
states and skills as $s' = hs$ and $z' = hz$. Using the equivariance of
$\phi_F$ (Eq.~\eqref{equiv_prop}) and the fact that the representation
$\rho(h)$ is unitary, we obtain
\begin{align*}
r(hs_t, hz, hs_{t+1})
&= \big\langle
        \phi_F(hs_{t+1}) - \phi_F(hs_t),\, hz
   \big\rangle \\
&= \big\langle
        \rho(h)\big(\phi_F(s_{t+1}) - \phi_F(s_t)\big),
        \rho(h) z
   \big\rangle \\
&= \big\langle
        \phi_F(s_{t+1}) - \phi_F(s_t),\, z
   \big\rangle \\
&= r(s_t, z, s_{t+1}),
\end{align*}
where the third equality uses unitarity of $\rho(h)$:
$\langle \rho(h)x, \rho(h)y \rangle = \langle x, y \rangle$.
Thus, the intrinsic reward \eqref{eq:reward-def} is invariant under the joint group action on states and skills. 

This invariance leads to a significant theoretical result for our learning process. Because the environment dynamics are group-invariant (by assumption) and the derived intrinsic reward $r$ is group-invariant (as shown above), the skill discovery problem constitutes a group-invariant MDP \cite{wang2022equivariant}. This structural property guarantees the existence of an optimal equivariant policy $\pi^*$.\\

We now connect this architectural choice back to the theoretical guarantees of our objective:

\begin{remark}[Exactness for Equivariant Policies]
    While Theorem~\ref{thm:eq-opt} guarantees the existence of a symmetric global optimum, our method parameterizes the policy $\pi$ to be equivariant based on the group-invariant MDP. 
    In a group-invariant MDP, an equivariant policy induces a group-invariant joint distribution $p(s,z)$ (Lemma~\ref{lem:occ-invariance}). 
    Under such a symmetric distribution, the optimal scoring function $f^*$ for the unconstrained WDM is inherently group-invariant (as non-symmetric components of the scoring function average to zero or contribute nothing to the objective).
    Therefore, for our equivariant policy class, optimizing the restricted objective $I^G_{\mathcal W}$ is mathematically equivalent to optimizing the standard WDM, ensuring no loss of expressivity.
\end{remark}

\begin{algorithm}[t]
\caption{Group-Invariant Unsupervised Skill Discovery}
\label{algo}
\begin{algorithmic}[1]
    \STATE \textbf{Initialize:} Equivariant policy $\pi(a|s,z)$, Fourier mapping $\phi_F(s)$, dual variable $\lambda$, replay buffer $\mathcal{D}$
    \FOR{$\text{epoch} = 1, \dots, N$}
        \FOR{$\text{episode} = 1, \dots, M$}
            \STATE Sample skill $z \sim p(z)$
            \STATE Collect trajectory $\tau$ using $\pi(a|s,z)$
            \STATE Store transitions $(s, a, s', z)$ in $\mathcal{D}$
        \ENDFOR
        
        \STATE \textbf{1. Update Discriminator} $\phi_F$:
        \STATE Optimize $\mathcal J_\phi$ (Eq.~(\ref{phi_F_loss}))
        
        \STATE \textbf{2. Update Dual Variable} $\lambda$:
        \STATE Optimize $\mathcal J_\lambda$ (Eq.~(\ref{lam_loss}))
        
        \STATE \textbf{3. Update Policy} $\pi$:
        \STATE Compute rewards $r = \langle \phi_F(s')-\phi_F(s), z \rangle$
        \STATE Update $\pi$ using symmetry-aware SAC \cite{wang2022equivariant, chang2025partial}
    \ENDFOR
\end{algorithmic}
\end{algorithm}

\subsection{Group Symmetry in Downstream Tasks}
After skill discovery, prior works \cite{park2022lipschitz, park2023controllability, park2023metra} reuse the discovered skills for downstream tasks via a hierarchical architecture. A high-level policy $\pi^h(z\mid s,\text{goal})$ selects a latent skill conditioned on the current state and goal, while a low-level policy $\pi^l (a\mid s,z)$ executes primitive actions using the pretrained skill-conditioned policy (with parameters frozen).

This interaction induces a \textbf{Fixed-interval Semi-MDP} \cite{sutton1999between}, where the high-level transition dynamics depend on the cumulative effect of the low-level policy over $k$ steps. We formally define this transition probability $P_k$: 

\begin{definition}[Fixed-interval High-level Transition]
Given a fixed interval $k$, the high-level transition kernel induced by the low-level policy $\pi^l$ is defined as the $k$-step roll-out:
\begin{equation}
\label{eq:Pk-def}
\begin{aligned}
    &P_k(s_{t+k} \mid s_t, z) := \int \dots \int \prod_{i=0}^{k-1} \bigg[ \\
    &\quad \pi^l(a_{t+i} \mid s_{t+i}, z) \cdot
    P^l(s_{t+i+1} \mid s_{t+i}, a_{t+i})
    \bigg] \, d\tau_{int}
\end{aligned}
\end{equation}
where $P^l$ denotes the one-step transition kernel of the underlying MDP, and the integration is over all intermediate states and actions $\tau_{int} = \{a_{t:t+k-1}, s_{t+1:t+k-1}\}$.
\end{definition}

For downstream tasks, the extrinsic reward can typically be chosen to be group-invariant (e.g., sparse goal-reached reward). We now show that if the environment dynamics and low-level skill policy are symmetric, then the induced high-level semi-MDP is itself group-invariant.

\begin{theorem}[Group-invariant Fixed-interval Semi-MDP] 
\label{theorem:semiMDP} 
Assume the environment dynamics are group-invariant and the pretrained skill policy is equivariant: \begin{align*} 
P^l(gs' \mid gs, ga) &= P^l(s' \mid s, a) \\ 
\pi^l(ga \mid gs, gz) &= \pi^l(a \mid s, z) \quad \forall g \in G. 
\end{align*} 
Then, the induced high-level transition kernel (Eq.~(\ref{eq:Pk-def})) satisfies group invariance: \begin{equation*} 
P_k(g s_{t+k} \mid g s_t, g z) = P_k(s_{t+k} \mid s_t, z), \qquad \forall g \in G. 
\end{equation*} 
Consequently, the fixed-interval semi-MDP is group-invariant whenever the high-level reward is group-invariant. \end{theorem}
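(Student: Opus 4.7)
The plan is a direct change-of-variables argument: substitute the group-transformed arguments into the definition of $P_k$, relabel the dummy integration variables by $g^{-1}$, and then read off group invariance by pushing $g$ through each factor using the equivariance of $\pi^l$ and the group-invariance of $P^l$. Throughout, I will rely on the fact that the relevant group actions (rotations in $SO(2)$ or $C_N$) act by orthogonal/unitary matrices and hence preserve Lebesgue/counting measure, so that Jacobian factors are identically $1$.

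First I would write out $P_k(g s_{t+k}\mid g s_t, g z)$ explicitly by plugging $s_t \mapsto g s_t$, $s_{t+k}\mapsto g s_{t+k}$, and $z \mapsto g z$ into Eq.~\eqref{eq:Pk-def}. The intermediate variables $\tau_{int}=\{a_{t:t+k-1}, s_{t+1:t+k-1}\}$ are dummy integration variables; I will rename them to a second primed set $\{a_{t+i}', s_{t+i}'\}$. I then perform the substitution
\[
a_{t+i}' = g\,\bar a_{t+i}, \qquad s_{t+i}' = g\,\bar s_{t+i},
\]
for $i=0,\ldots,k-1$ (and $i=1,\ldots,k-1$ for states). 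Because $g$ acts orthogonally, $d a_{t+i}' = d\bar a_{t+i}$ and $d s_{t+i}' = d\bar s_{t+i}$, so the integration measure is unchanged. The boundary terms satisfy $g^{-1}(g s_t)=s_t$ and $g^{-1}(g s_{t+k})=s_{t+k}$, so after relabeling the chain starts at $s_t$ and ends at $s_{t+k}$.

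Next I would apply the two symmetry hypotheses factor by factor. Each policy term becomes
\[
\pi^l(g\bar a_{t+i}\mid g\bar s_{t+i}, g z) = \pi^l(\bar a_{t+i}\mid \bar s_{t+i}, z)
\]
by equivariance of $\pi^l$, and each transition term becomes
\[
P^l(g\bar s_{t+i+1}\mid g\bar s_{t+i}, g\bar a_{t+i}) = P^l(\bar s_{t+i+1}\mid \bar s_{t+i}, \bar a_{t+i})
\]
by group-invariance of the dynamics. The product therefore matches the integrand of $P_k(s_{t+k}\mid s_t, z)$ exactly, and the integral reduces to $P_k(s_{t+k}\mid s_t, z)$, proving the claimed invariance. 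For the second conclusion, combining this invariance of $P_k$ with the assumed group-invariance of the high-level reward shows that the induced fixed-interval semi-MDP satisfies the symmetry conditions of Section~\ref{gmdp}, hence is group-invariant.

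The main obstacle I anticipate is being precise about the measure-theoretic bookkeeping: I must argue that $\pi^l(\cdot\mid s,z)$ and $P^l(\cdot\mid s,a)$ transform as \emph{conditional densities} rather than mere functions, so that the equivariance/invariance written at the level of symbols indeed matches the substitution after the change of variables. For compact (or discrete) groups acting by orthogonal transformations this is automatic because the action preserves the reference measure and the Jacobian is $1$; however, if the state or action spaces carry non-trivial reference measures, one would have to verify a compatibility condition between the representation $\rho$ and the measure. Apart from this, the argument is a mechanical telescoping of equivariance through the $k$-step roll-out.
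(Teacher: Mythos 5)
Your proposal is correct and follows essentially the same route as the paper's proof in Appendix~\ref{gsmdp}: a change of variables $s'_{t+i}=g\hat s_{t+i}$, $a'_{t+i}=g\hat a_{t+i}$ on the intermediate trajectory, volume preservation of the group action, and factor-by-factor application of the invariance of $P^l$ and equivariance of $\pi^l$. Your added caveat about treating $\pi^l$ and $P^l$ as conditional densities compatible with the reference measure is a reasonable refinement that the paper handles implicitly via the unit-Jacobian remark.
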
 
The proof is provided in Appendix~\ref{gsmdp}.\\

\textbf{Generalizable Skills.}\quad Theorem~\ref{theorem:semiMDP} establishes that our pretrained skills induce a group-invariant fixed-interval semi-MDP. Since the skills are represented in the Fourier space and encode symmetric features by construction, an equivariant high-level policy $\pi^h(z \mid s, g)$ can systematically exploit this structure.
Because the transition dynamics are group-invariant, a high-level policy trained in one group element can generalize to any other group elements by simply transforming the skill vector $z$ according to the group action (Fig.~\ref{fig:overview}). This allows for highly sample efficient downstream task training.

\subsection{Practical Implementation}
To finalize the training objective, we must specify the distance metric $d$ for the Lipschitz constraint. 
We adopt the temporal distance $d_T(s,s')$ \cite{park2023metra}, which measures the expected number of steps between states. Since the underlying MDP dynamics are group-invariant, $d_T$ is inherently group-invariant (see Appendix~\ref{gtd} for proof), satisfying the metric requirement of Definition~\ref{def:gwdm}.

Our training procedure follows the dual gradient descent method of METRA, but substitutes the standard discriminator with our equivariant Fourier mapping $\phi_F$. The discriminator parameters are updated to minimize the following loss, which balances skill alignment against the Lipschitz constraint via a Lagrange multiplier $\lambda$:
\begin{align}
    \label{phi_F_loss}
    \mathcal{J}_{\phi} &= \mathbb{E}_{\mathcal{D}} \Big[
        \langle \phi_F(s')-\phi_F(s), z \rangle \notag \\
        &\quad + \lambda \cdot \min\bigl(\epsilon, 1-\|\phi_F(s')-\phi_F(s)\|_2^2\bigr)
    \Big], \\
    \label{lam_loss}
    \mathcal{J}_{\lambda} &= \mathbb{E}_{\mathcal{D}} \Big[
        \lambda \cdot \min\bigl(\epsilon, 1 - \|\phi_F(s')-\phi_F(s)\|_2^2 \bigr)
    \Big],
\end{align}
where $\mathcal D$ is the replay buffer.
The policy $\pi(a\mid s,z)$ is updated using group equivariant SAC \cite{wang2022equivariant} or PE-SAC \cite{chang2025partial} to maximize the intrinsic reward $r=\langle\phi_F(s')-\phi_F(s), z\rangle$.

\section{Experiments}

We evaluate GISD against the state-of-the-art unsupervised skill discovery method, METRA \cite{park2023metra}.
Our goal is to understand how explicitly exploiting symmetry during skill discovery and downstream task training affects (i) sample efficiency, (ii) state-space coverage, and (iii) downstream task performance.
We follow the overall experimental setup of METRA and consider both state-based and pixel-based locomotion with continuous latent skills.

\subsection{Benchmark environments and Symmetry Structure}
We use two benchmark locomotion tasks with known geometric symmetries.

\paragraph{State-based Ant ($C_4$ symmetry).}
For state-based skill discovery, we use the MuJoCo Ant environment~\cite{brockman2016openai} with a planar $(90^\circ)$ rotational symmetry, modeled as the cyclic group $C_4$.
To faithfully preserve this symmetry, we \emph{do not} apply observation normalization (e.g., running mean/variance) to the state input. Such preprocessing typically warps coordinates and breaks equivariance.
In this setting, we restrict the Fourier representation to the frequency-$1$ $C_4$ irrep and discover 2D latent skills in this subspace, matching the skill dimensionality used in METRA (as introduced in Section~\ref{sd_fourier}). Intuitively, this emphasizes directional, rotation equivariant behaviors (e.g., "move in a given heading"), which are most relevant in Ant locomotion.
We empirically observed that incorporating invariant components had little benefit and could introduce noise in discovery, so we focus on the equivariant features in this work. 

\paragraph{Pixel-based Quadruped ($D_1 \cong C_2$ / flip symmetry}).
For pixel-based skill discovery, we adopt the vision-based quadruped environment from the DeepMind Control Suite (DMC)~\cite{tunyasuvunakool2020}), following the METRA \cite{park2023metra} setup.
METRA encodes position information by making the floor color vary smoothly as a function of the agent's $(x,y)$ coordinates.
To fully expose the horizontal flip symmetry to the agent, we instead redesign the floor so that its color varies \emph{radially} from the origin.
This maintains a consistent appearance under left-right flips while still revealing global position, and matches the $C_2$ (horizontal flip) symmetry.

The group $C_2$ has a 2D real representation consisting of an invariant channel and a sign-flip channel. To keep the comparison with METRA fair, we use a 4D latent skill space by concatenating two such Fourier-feature vectors.
This can be viewed as learning two independent Fourier components. In practice, this provided a good balance between expressivity and parity with the baseline.

\paragraph{Symmetry-aware Policy.}
On the control side, we use Partially Equivariant SAC \cite{chang2025partial} for the state-based Ant, which is robust to local symmetry-breaking effects (e.g., contact asymmetries).
For the pixel-based quadruped, we use Equivariant SAC \cite{wang2022equivariant}.
In both environments, a high-level policy $\pi^h (z \mid s,g)$ selects a skill vector $z$ every $K$ steps to pursue a goal $g$, while a low-level policy $\pi^l (a\mid s,z)$ executes the learned skill.
The skill policy is pretrained and then forzen for downstream training, as in METRA.

\begin{figure}[!t]
    \centering
    \includegraphics[width=0.9\columnwidth]{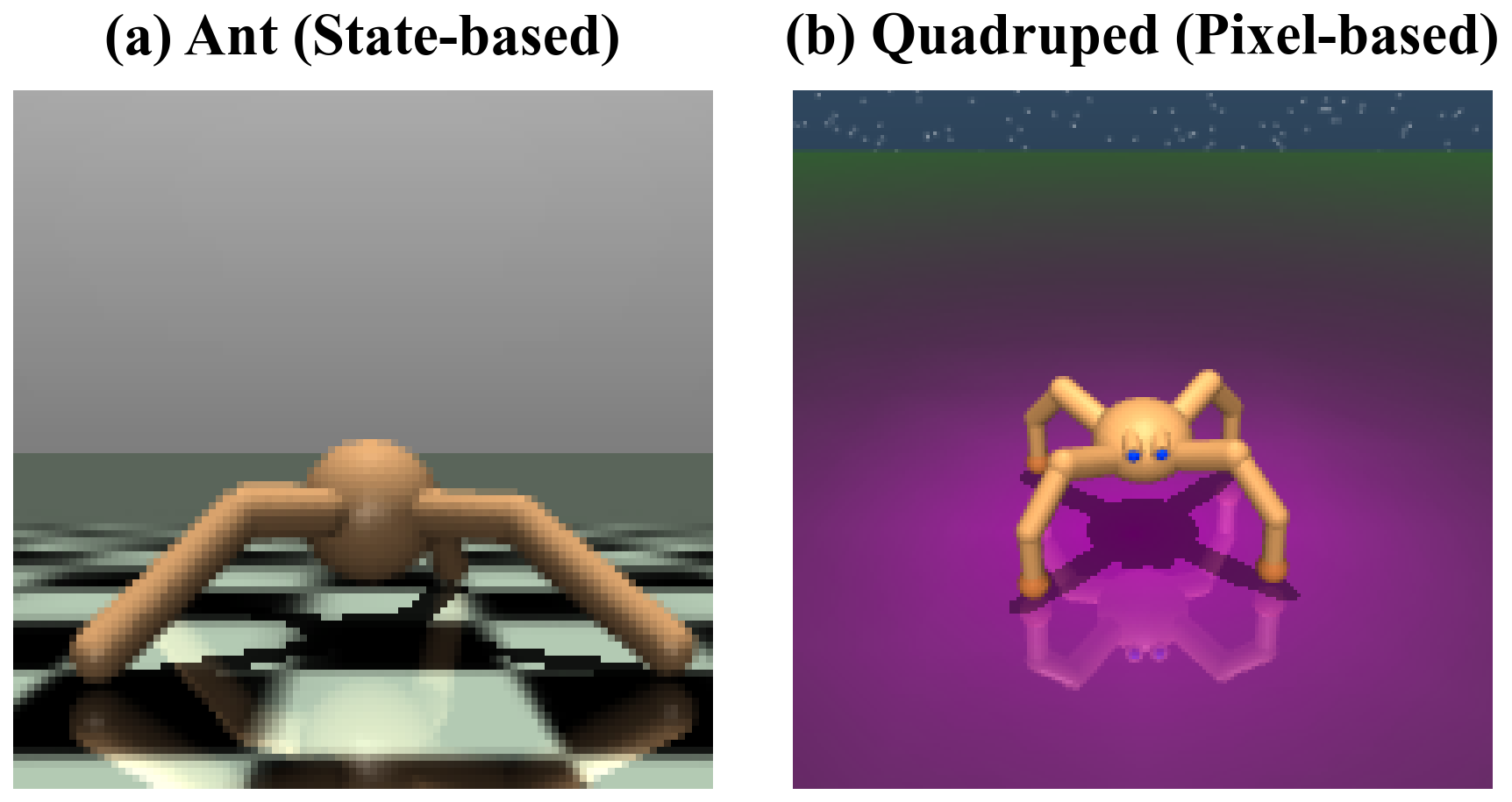}
    \caption{\textbf{Benchmark Environments.}
    We evaluate GISD in state-based Ant and pixel-based Quadruped locomotion.
    In the pixel-based setting, we redesign the floor so that color varies radially from the origin,
    making the observations consistent with the horizontal flip symmetry.}
    \label{fig:env}
\end{figure}

\subsection{Evalutation Metric}
We evaluate both skill discovery and downstream task performance.

\paragraph{Staet-space coverage.}
Following METRA \cite{park2023metra}, we measure how well skills explore the environment by computing the coverage of the agent's $(x,y)$ positions.
At evaluation, we sample 48 latent skill vectors uniformly from the skill prior, roll each for a fixed horizon, and aggregate the resulting trajectories.
We then quantify coverage using the fraction of grid cells visited in the task-relevant region of the plane.
For the state-based Ant, we restrict the coverage region to $[-30, 30]^2$, which corresponds to the maximum distance the agent can reach under the goal-sampling scheme in the downstream tasks. This avoids counting irrelevant areas.

\paragraph{Downstream task.}
We next assess how useful the discovered skills are for solving goal-conditioned tasks.
We train a high-level policy ($\pi^h (z\mid s,g)$ on top of the frozen skill policy $(\pi(a\mid s,z)$.

In the state-based Ant environment, each episode is a multi-goal task:
Whenever the current goal is reached-or after $K$ steps-a new goal is sampled uniformly around the current position (we follow METRA and use a sampling range of $[-7.5, 7.5]^2$.

We report the average return as a function of training iterations during downstream task training, averaged over multiple random seeds.

\begin{figure}[!tb]
    \centering
    \includegraphics[width=0.9\columnwidth]{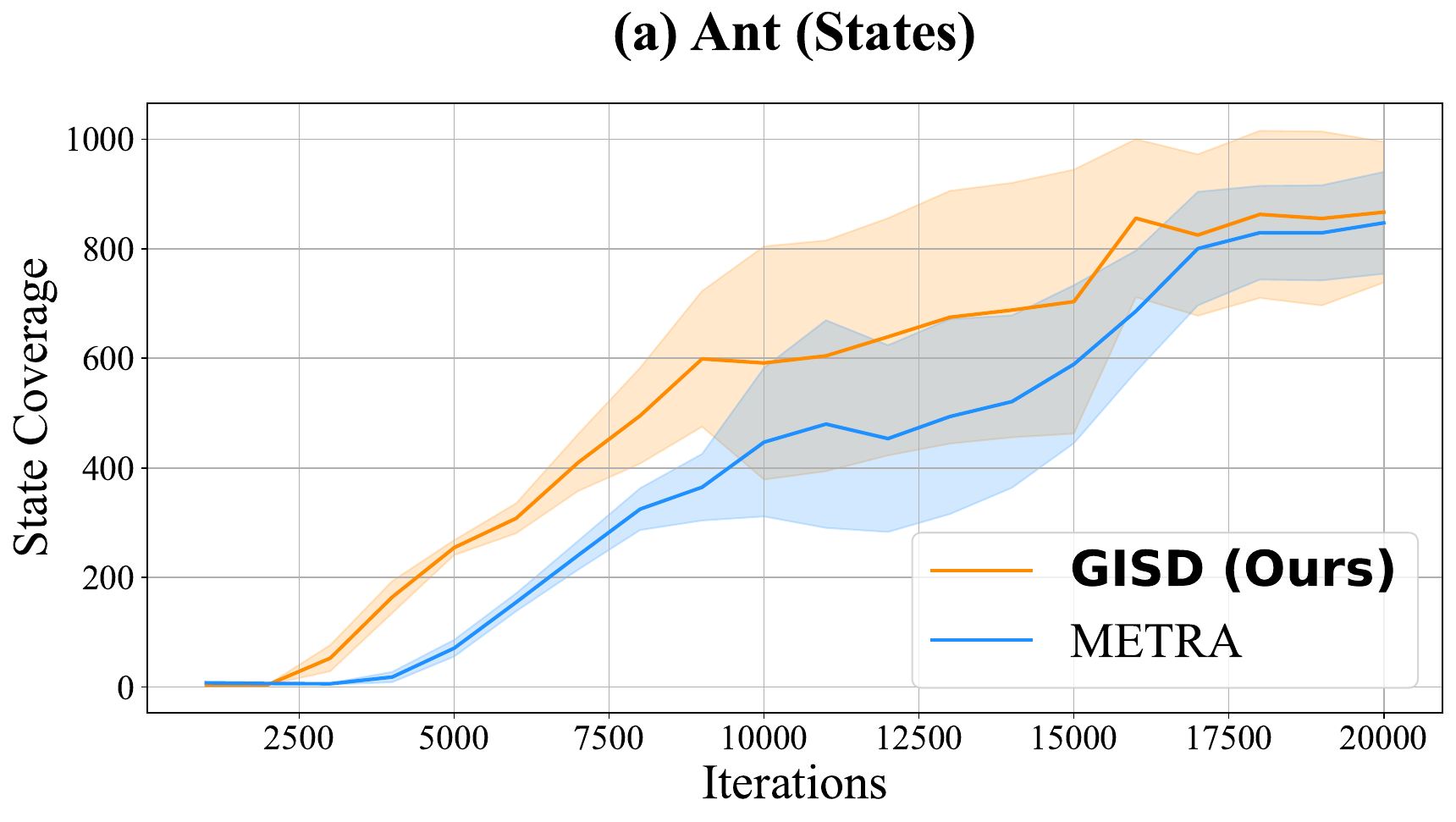}
    \vspace{0.8em} 
    \includegraphics[width=0.9\columnwidth]{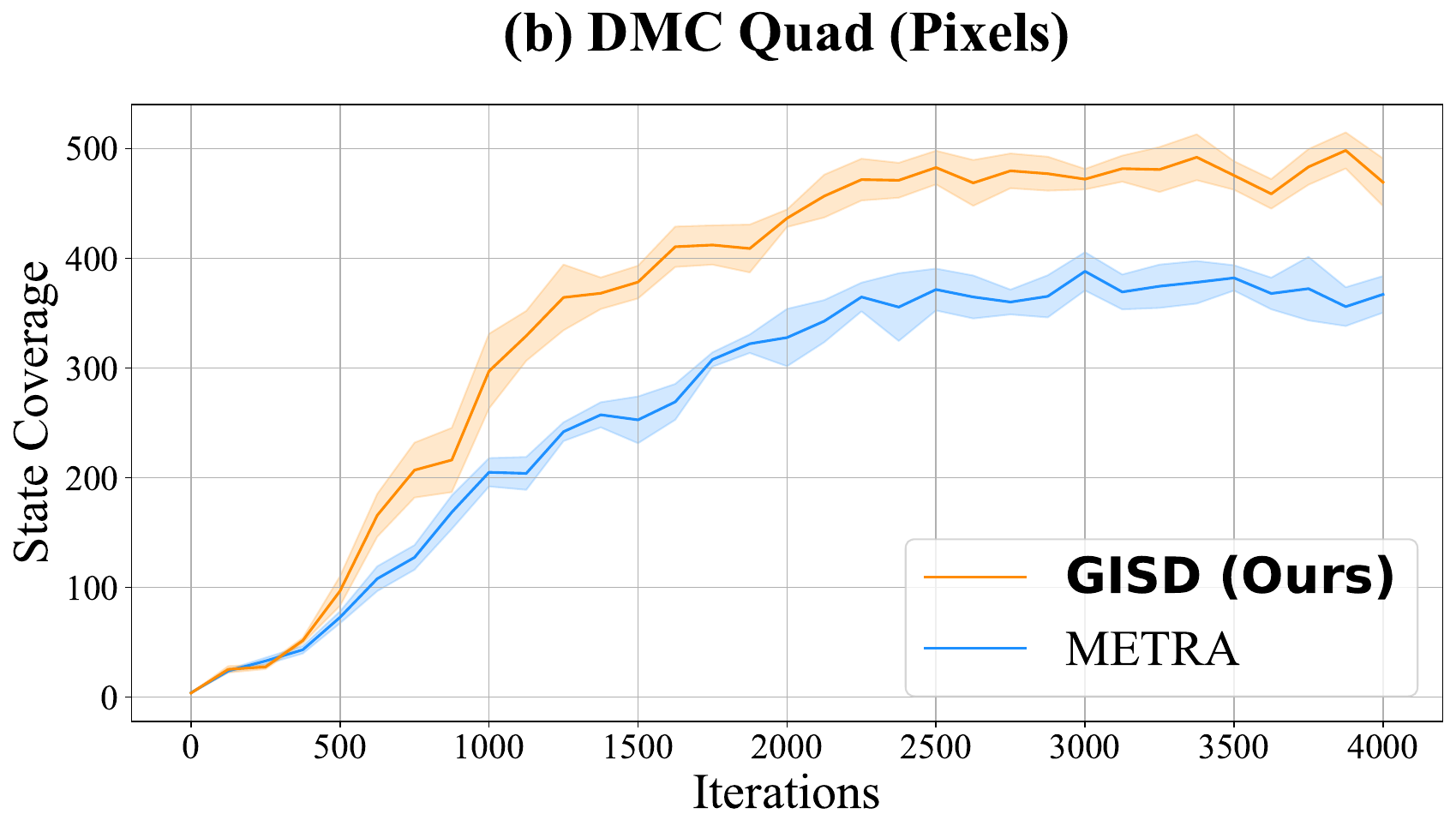}
    \caption{\textbf{State-space coverage during skill discovery.}
    \textbf{(a)} State-based Ant (4 seeds). 
    \textbf{(b)} Pixel-based Quadruped (5 seeds).
    Shaded regions show standard error. 
    GISD achieves higher coverage and better sample efficiency than METRA in both environments.}
    \label{fig:skill_discovery}
\end{figure}

\subsection{Results and Analysis}

\paragraph{Skill discovery and state coverage.}
Figure~\ref{fig:skill_discovery} shows the evolution of $(x,y)$-coverage during skill pretraining.
Across both state-based Ant and pixel-based quadruped, GISD achieves broader state-space coverage with higher sample efficiency than METRA. 
The shaded regions indicate standard error across seeds.
In the Ant environment, the variance is larger due to the lack of observation normalization, which makes the learning dynamics more sensitive to initialization.

To qualitatively understand the learned behaviors, Figure~\ref{fig:skill_discovery_vis} visualizes trajectories induced by 48 randomly sampled skills.
For GISD, skills tile the plane in a symmetry-consistent way:
In Ant, each direction has rotationally related counterparts (approximately obeying the $C_4$ symmetry).
In Quadruped, up-down reflected skills exhibit mirrored trajectories under the flip symmetry.
In contrast, METRA often learns clusters of redundant skills that differ only slightly in direction or fail to cover all symmetric modes, which matches the quantitative coverage gap.

\begin{figure}[!tb]
    \centering
    \includegraphics[width=\columnwidth]{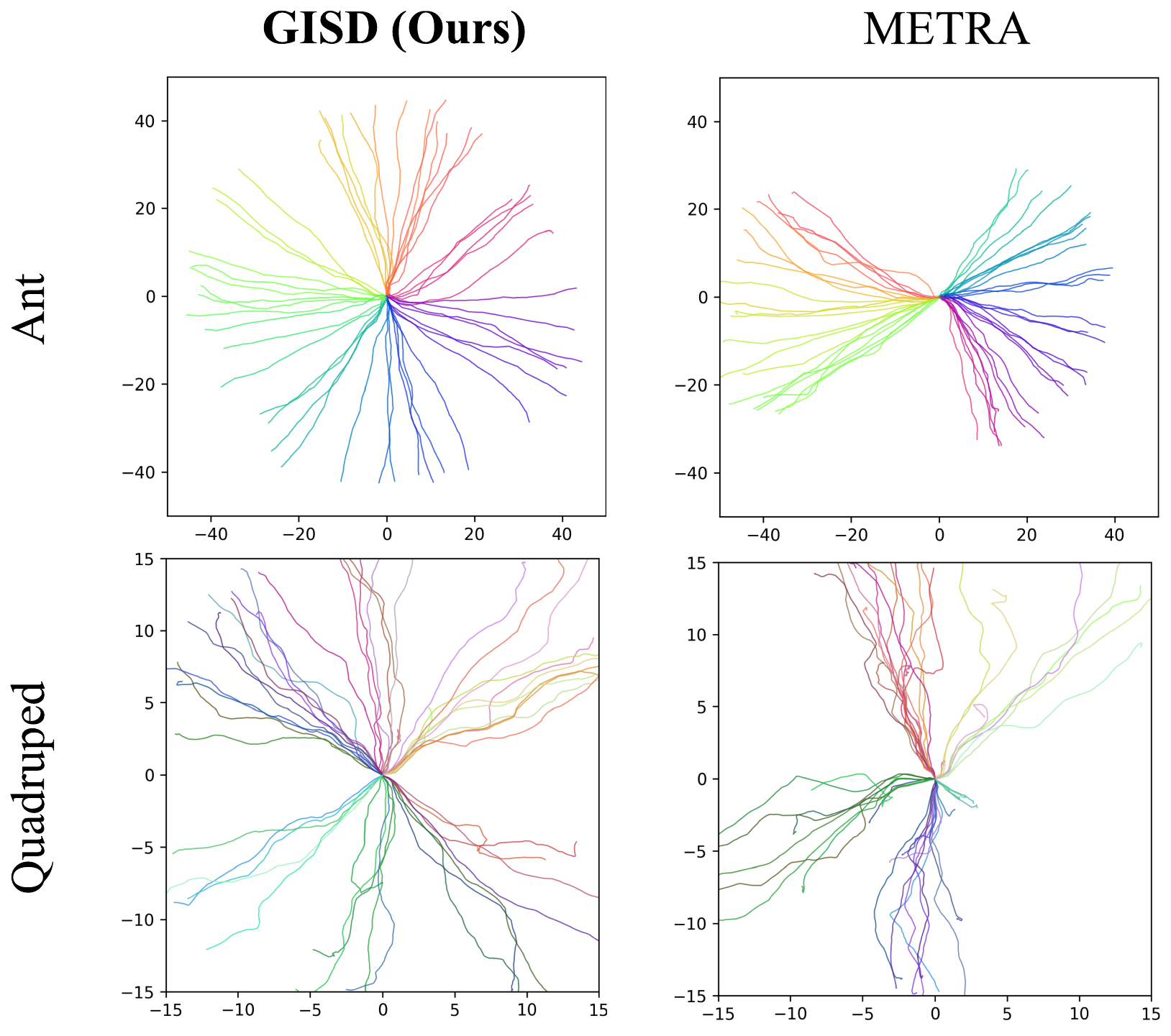}
    \caption{\textbf{Visualization of discovered skills.}
    We plot trajectories for 48 randomly sampled skills. Colors indicate different latent skills.
    GISD discovers symmetry-consistent skills (Ant: \(C_4\) rotations; Quadruped: horizontal flips), 
    leading to substantially broader and more uniform state-space coverage than METRA.}
    \label{fig:skill_discovery_vis}
\end{figure}

\paragraph{Downstream performance.}
Figure~\ref{fig:downstream} compares downstream goal-reaching performance.
In both the state-based Ant and pixel-based DMC Quadruped tasks, GISD attains higher returns than METRA and reaches strong performance with fewer environment steps.
Figure~\ref{fig:downstream_vis} visualizes the Fourier skill vectors selected by the high-level policy $\pi^h(z \mid s, g)$ in the DMC Quadruped environment. Starting from a shared initial state with horizontally mirrored goals, the policy chooses skills whose invariant components are nearly identical, while the equivariant components exhibit approximate sign flips. This behavior reflects the underlying flip symmetry and indicates that the Fourier-space skills learned by GISD are both diverse and systematically reusable across transformed tasks.

\begin{figure}[!tb]
    \centering
    \includegraphics[width=0.9\columnwidth]{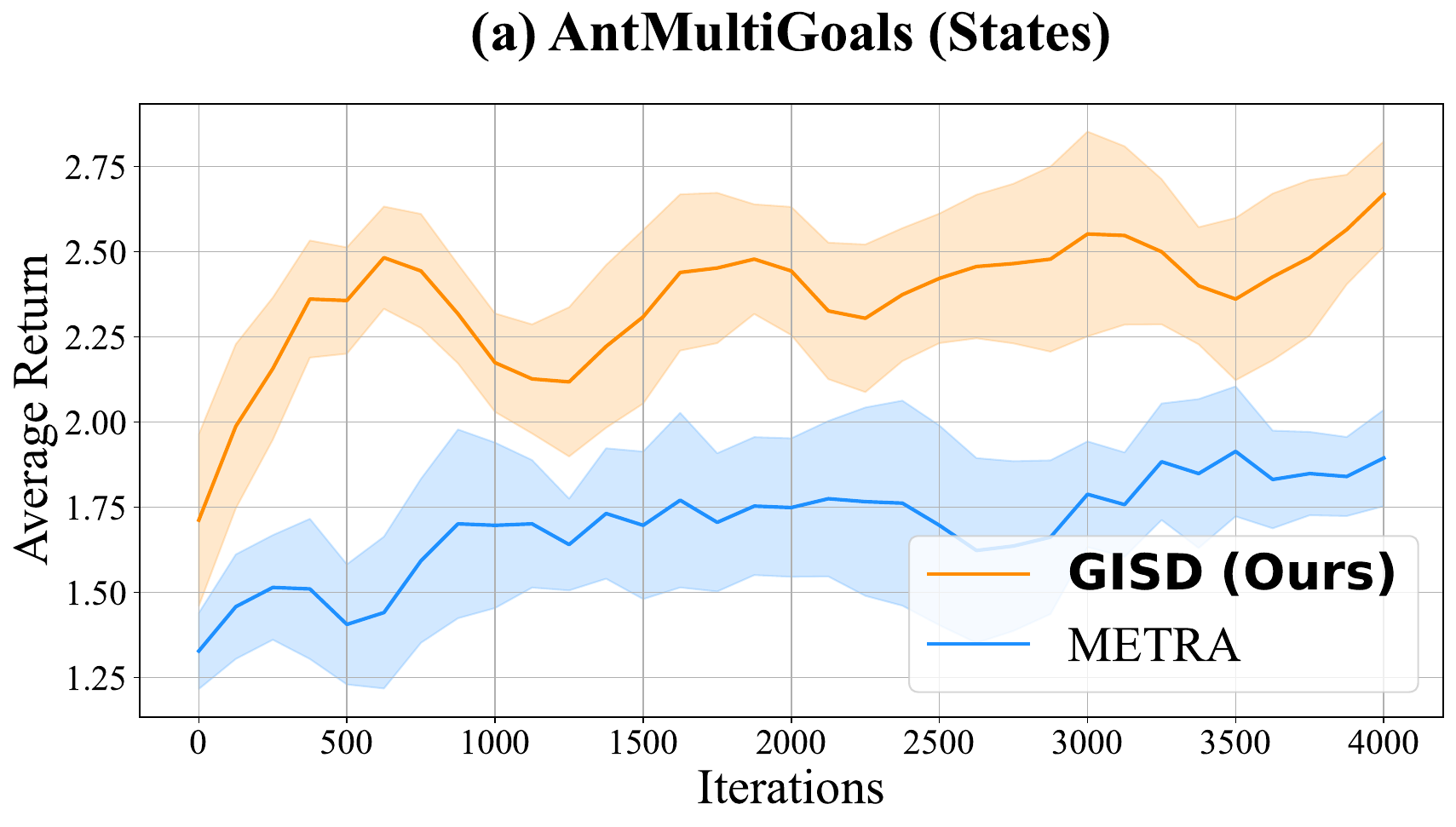}
    \vspace{0.8em} 
    \includegraphics[width=0.9\columnwidth]{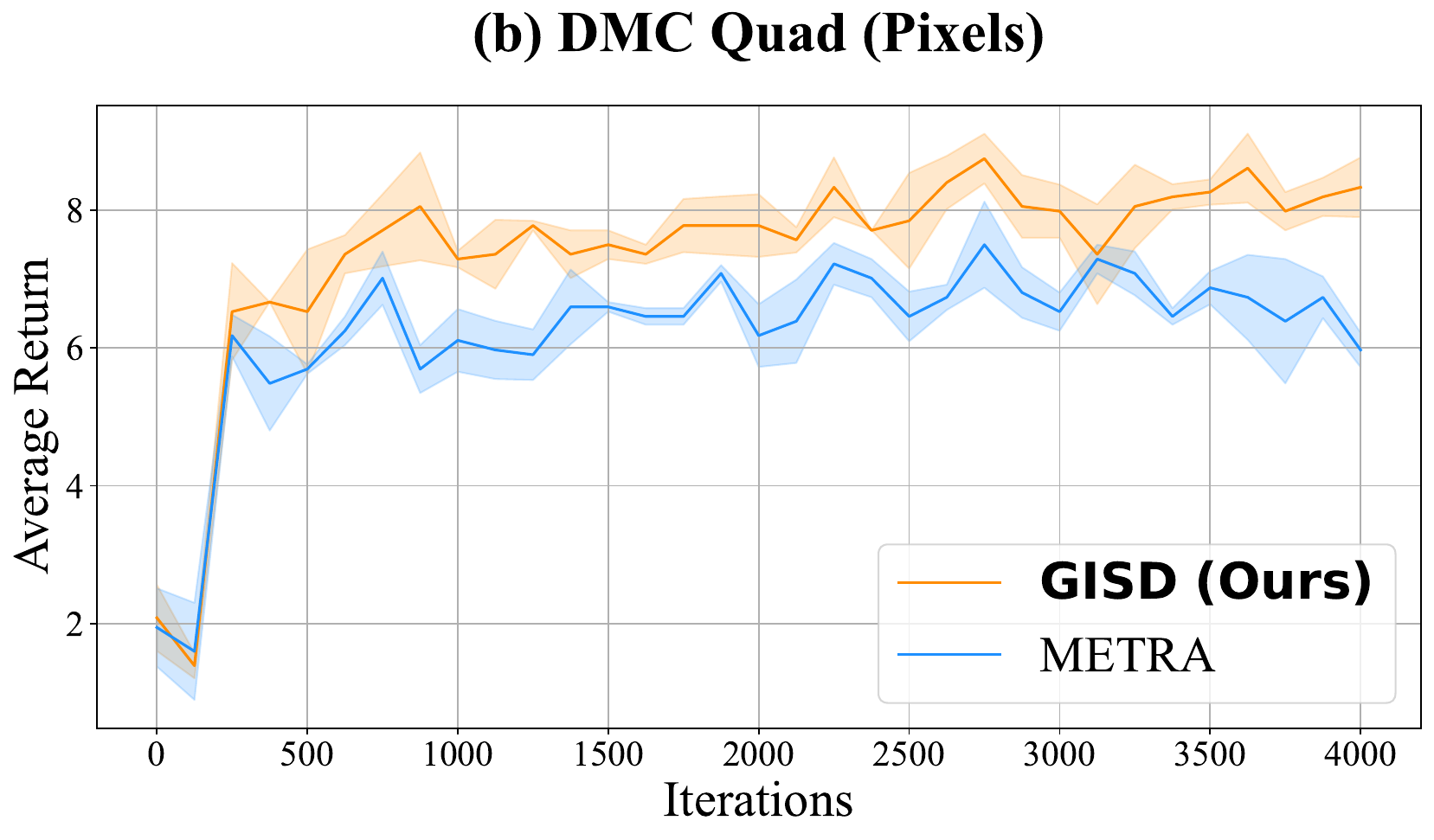}
    \caption{\textbf{Downstream task performance.}
    \textbf{(a)} Average return in the state-based Ant environment (4 seeds).
    \textbf{(b)} Average return in the pixel-based Quadruped environment (3 seeds).
    Shaded regions show standard error. 
    GISD consistently achieves higher performance and better sample efficiency than METRA.}
    \label{fig:downstream}
\end{figure}

Overall, these results support our main claim: 
By constraining the scoring function to a group-invariant function class and parameterizing it in Fourier space, GISD discovers symmetry-aligned skills that explore more effectively and transfer more efficiently to downstream control.

\begin{figure}[!tb]
    \centering
    \includegraphics[width=\columnwidth]{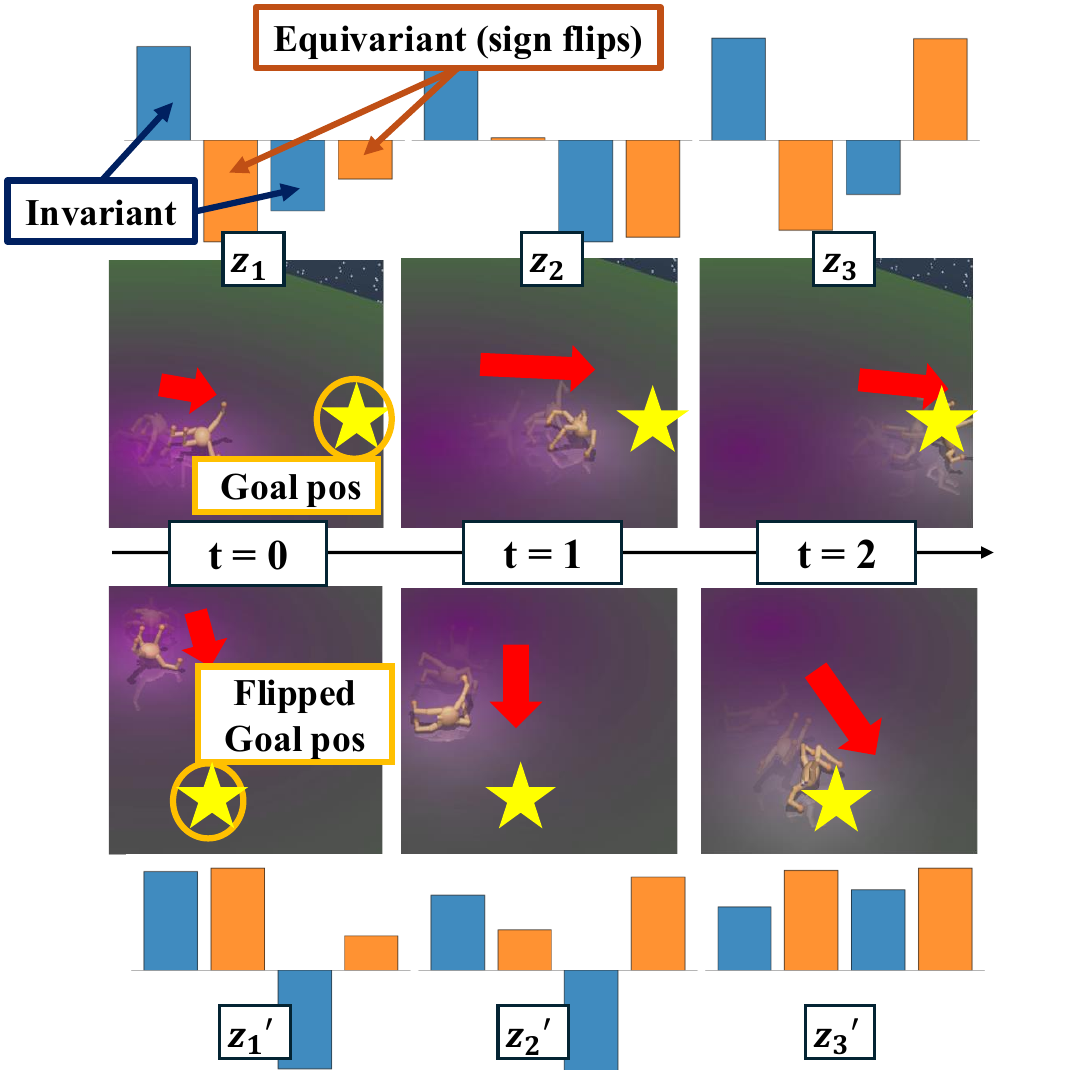}
    \caption{\textbf{Visualization of downstream task in DMC Quadruped.} Starting from identical initial states with horizontally flipped goal positions, we visualize the Fourier skill vectors predicted by the high-level policy over time. Blue bars denote the invariant components, while orange bars denote the equivariant components, which should flip sign under the flip transformation. At early time steps ($t=0,1$), the invariant features closely match, and the equivariant features appear with approximately opposite signs and similar magnitudes. By $t=2$, the components begin to diverge, reflecting accumulated symmetry-breaking during execution (e.g., the quadruped's rolling behavior under each skill).}
    \label{fig:downstream_vis}
\end{figure}

\section{Conclusions and Limitations}
In this work, we introduced \textbf{Group-Invariant Skill Discovery}, a framework that exploits group equivariance in skill discovery to improve sample efficiency and generalization. By enforcing group invariance in the Wasserstein dependency measure, we recast the objective as a symmetry-aware MDP and obtain latent skills that are consistent with the underlying group structure, which can then be effectively reused in downstream tasks.
Our experiments validate the approach on both state-based and vision-based locomotion benchmarks, but a natural next step is to test it on more challenging domains such as Humanoid control. The method also inherits common limitations of equivariant networks, including increased computational cost and the need to specify the relevant symmetry group a priori.
Looking forward, we plan to extend this framework to manipulation tasks, where skill discovery is notoriously difficult but rich symmetries are often present. Another promising direction is to study how the symmetric structure of the discovered skills can be more directly exploited in downstream planning and policy learning.

% \section*{Acknowledgment}
% The authors used the ChatGPT large language model (OpenAI) \cite{chatgpt} to polish texts. All texts written with LLM assistance was reviewed and verified by the authors.

\newcommand{\wdm}{\mathrm{WDM}}
\newcommand{\E}{\mathbb{E}}

\section*{Appendix A.  Theoretical Details}

\subsection{Proof of Theorem \ref{thm:eq-opt}}
\label{app:eq-opt}

\theoremRestate{thm:eq-opt}
\begin{theoremR}[Existence of Equivariant Optima]
In an MDP where the dynamics, initial state distribution, skill prior, and ground metric are all group-invariant, the Wasserstein dependency measure admits a globally optimal solution $(\bar \pi, \bar f)$ such that
    \[
    \bar \pi(ga\mid gs, gz)=\bar \pi(a\mid s,z), \quad \bar f(gs,gz)=\bar f(s,z), \quad \forall g\in G.
\]
In other words, among all WDM global maximizers, there exists at least one policy-function pair with an equivariant policy and a group-invariant $1$-Lipschitz function.
\end{theoremR}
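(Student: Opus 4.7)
The plan is a symmetrization argument combined with the MDP-homomorphism structure of group-invariant MDPs: starting from any global maximizer $(\pi^*, f^*)$ of the WDM, I would construct a feasible symmetric pair $(\bar\pi, \bar f)$ that attains the same value. I would first verify the joint $G$-equivariance of the objective: for $\pi_g(a \mid s,z) := \pi(g^{-1}a \mid g^{-1}s, g^{-1}z)$ and $f_g(s,z) := f(g^{-1}s, g^{-1}z)$, a change of variables combined with invariance of the transition kernel, initial distribution, skill prior, and ground metric gives $J(\pi_g, f_g) = J(\pi, f)$. In particular, the entire $G$-orbit of any global maximizer consists of global maximizers, so the set of optima is itself $G$-invariant.

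Next, I would decouple the two variables. Rewriting the inner WDM objective as
\begin{equation*}
L_\pi(f) = \mathbb{E}_{p^\pi}\!\bigl[\,f(s,z) - \mathbb{E}_{p(z')}[f(s,z')]\,\bigr],
\end{equation*}
it becomes a functional that is \emph{linear} in the occupancy of $\pi$, with effective reward $r_f(s,z) := f(s,z) - \mathbb{E}_{p(z')}[f(s,z')]$. When $f \in \mathcal F_G$, invariance of both $f$ and $p(z)$ implies $r_f$ is group-invariant, so the problem $\sup_\pi L_\pi(f)$ falls squarely within the group-invariant MDP framework of Section~\ref{gmdp} and admits an equivariant maximizer. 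Conversely, for any equivariant $\pi$, Lemma~\ref{lem:occ-invariance} yields a group-invariant joint $p^\pi$, and Proposition~\ref{prop:group-avg} applied to any dual maximizer gives $\sup_f L_\pi(f) = \sup_{f \in \mathcal F_G} L_\pi(f)$. Combining these reductions, I would set $\bar f := \int_G f^*_g\, d\mu(g) \in \mathcal F_G$ (Proposition~\ref{prop:group-avg}) and let $\bar\pi$ be an equivariant maximizer of $L_\pi(\bar f)$ from the first reduction; the pair $(\bar\pi, \bar f)$ is then symmetric and feasible by construction.

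The main obstacle is closing the loop: showing $L_{\bar\pi}(\bar f) = J^* := J(\pi^*, f^*)$, i.e., that the double restriction to equivariant policies and invariant scoring functions does not strictly decrease the sup. Convexity of $W_1$ in the joint distribution only yields the trivial upper bound $\sup_{\bar\pi} J(\bar\pi) \le J^*$. To obtain equality I would argue that the non-invariant components of $(\pi^*, f^*)$, in the isotypic decomposition of $L^2(\mathcal S \times Z)$ under the $G$-action, contribute zero to the WDM objective, via an orthogonality-of-representations argument analogous to the Schur's Lemma computation in Section~\ref{sd_fourier}. This formalizes the ``non-symmetric components average to zero'' statement appearing in the Remark following Theorem~\ref{thm:eq-opt}, and forces $L_{\bar\pi}(\bar f) = J^*$, completing the proof.
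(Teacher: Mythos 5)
Your opening move (the orbit of a global maximizer consists of global maximizers, via a change of variables using invariance of the dynamics, priors, and metric) is exactly the paper's first step, and your ``conversely'' direction --- for an equivariant $\pi$ the induced $p^{\pi}$ is group-invariant, so averaging any dual witness over $G$ preserves the objective by linearity of $J(\pi,\cdot)$ in $f$, hence $\sup_f = \sup_{f\in\mathcal F_G}$ --- is precisely the paper's \emph{last} step. Where you diverge is the order of symmetrization: you average the scoring function first, setting $\bar f=\int_G f^*_g\,d\mu(g)$, and then invoke group-invariant-MDP theory to obtain an equivariant maximizer of the invariant effective reward $r_{\bar f}$. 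The paper does the reverse: it first symmetrizes the \emph{policy}, by Haar-averaging the orbit of occupancy measures $\rho^{\pi^*_g}$ inside the convex set of achievable occupancies and arguing the average remains optimal, and only then group-averages a dual witness for that already-equivariant policy. This ordering matters, because averaging the scoring function is lossless only once the occupancy measure is already invariant.

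The obstacle you flag is therefore real, and your proposed fix does not close it. Writing $J(\pi^*,\bar f)=\int_G J(\pi^*,f^*_g)\,d\mu(g)=\int_G J(\pi^*_{g^{-1}},f^*)\,d\mu(g)$, each integrand is only bounded above by $J^*$, since $f^*$ need not be a maximizing witness for the rotated policy $\pi^*_{g^{-1}}$; consequently $\sup_\pi L_\pi(\bar f)$ may be strictly below $J^*$, i.e., your $\bar f$ may simply fail to witness the optimum, and the equivariant policy you build on top of it inherits that deficit. The Schur/isotypic argument you invoke cuts the wrong way: when $(\pi^*,f^*)$ is genuinely asymmetric, the nontrivial isotypic components of $p^{\pi^*}$ and of $f^*$ can pair to make a strictly positive contribution to $J(\pi^*,f^*)$, and averaging $f^*$ alone annihilates exactly those contributions. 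Orthogonality of irreducibles shows that the cross terms vanish \emph{after} symmetrization --- which is the outcome to be feared here, not the claim to be proved. The ``non-symmetric components contribute nothing'' statement in the paper's Remark is conditional on the policy already being equivariant, which is why the paper symmetrizes the policy first. To repair your argument, either adopt the paper's order --- establish an optimal equivariant policy through the occupancy polytope first, after which invariance of $p^{\bar\pi}$ plus linearity of $J$ in $f$ makes the group average of any dual witness exactly optimal --- or supply an independent argument that some group-invariant $f$ attains the unrestricted optimum $J^*$.
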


\begin{proof}
We define the Kantorovich dual objective $J(\pi, f)$ for a policy $\pi$ and a $1$-Lipschitz function $f: \mathcal{S} \times \mathcal{Z} \to \mathbb{R}$ as:
\begin{align*} 
J(\pi, f) := \E_{\substack{z \sim p(z) \ s \sim p_{\pi}(s \mid z)}}[ f(s,z) ] - \E_{\substack{z \sim p(z) \ s \sim p_{\pi}(s)}} [ f(s,z) ]. 
\end{align*}
where $p_{\pi}(s \mid z)$ is the conditional state occupancy measure induced by $\pi$, and $p_{\pi}(s)$ is the corresponding marginal. The Wasserstein Dependency Measure (WDM) is defined as
\[
\wdm(\pi) = \sup_{\lVert f \rVert_L \le 1} J(\pi, f).
\]
Let $(\pi^*, f^*)$ be a global maximizer of the problem $\max_{\pi} \wdm(\pi)$.
By the assumptions of Theorem \ref{thm:eq-opt}, the tuple $(\mathcal{S}, \mathcal{A}, \mathcal{Z})$ is equipped with a measure-preserving group action, the environment dynamics are equivariant, and the ground metric is invariant.

Consequently, the skill discovery optimization problem can be formulated as finding the pair $(\pi, f)$ that maximizes this objective:
\begin{align*}
    \max_{\pi} \sup_{\lVert f \rVert_L \le 1} J(\pi, f).
\end{align*}
Let $(\pi^*, f^*)$ be any global maximizer of this problem. By the assumptions of Theorem \ref{thm:eq-opt}, the tuple $(\mathcal{S}, \mathcal{A}, \mathcal{Z})$ is equipped with a measure-preserving group action, and the environment dynamics satisfy equivariance. 

We proceed by showing that the orbit of this optimal solution under the group action implies the existence of a symmetric optimizer.\\

\textbf{1. Rotating an Optimal Pair.}\quad 
For any $g\in G$, define the rotated policy $\pi_g^*$ and rotated function $f_g^*$ by:
\begin{align*}
\pi_g^*(a\mid s,z) &:=\pi^*(g^{-1} a \mid g^{-1}s, g^{-1}z), \\
f_g^*(s, z) &:= f^*(g^{-1} s, g^{-1} z).
\end{align*}
Because the metric $d$ is group-invariant, $\lVert f_g^* \rVert_L = \lVert f^* \rVert_L \le 1$, so $f_g^*$ remains feasible.
The joint distribution induced by the rotated policy rotates covariantly: $p_{\pi^*_g}(s,z)= p_{\pi^*}(g^{-1}s, g^{-1}z)$ (rotated by left regular representation, Appendix~\ref{app:left-regular}). 
Using the measure-preserving property of the group action, we compute the objective value:
\begin{align*}
&J(\pi_g^*, f_g^*) \\
&= \E_{\substack{z \sim p(z) \\ s \sim p_{\pi_g^*}(s \mid z)}}[ f_g^*(s,z) ] - \E_{\substack{z \sim p(z) \\ s \sim p_{\pi_g^*}(s)}}[ f_g^*(s,z) ] \\ 
&= \E_{(s,z) \sim p_{\pi_g^*}}[ f^*(g^{-1}s,g^{-1}z) ] \\
&\qquad\qquad\qquad\qquad\qquad- \E_{(s,z) \sim p_{\pi_g^*}(s)p(z)}[ f^*(g^{-1}s,g^{-1}z) ] \\ 
&=\E_{(s',z') \sim p_{\pi^*}}[ f^*(s',z') ] - \E_{(s',z') \sim p_{\pi^*}(s')p(z')}[ f^*(s',z') ] \\ 
&= J(\pi^*, f^*).
\end{align*}
where $s'=g^{-1}s, z'=g^{-1}z$ are change of variables.
Thus, for every $g\in G$, the pair $(\pi_g^*, f_g^*)$ is also a global maximizer.\\

\textbf{2. Symmetrizing the Policy.}\quad
Since averaging policies directly is non-trivial, we work in the space of occupancy measures. Let $\rho^{\pi}$ denote the state-action-skill occupancy measure induced by $\pi$. The set of valid occupancy measures is convex, and the WDM objective is linear with respect to $\rho^{\pi}$ (for a fixed optimal $f$, the maximum value is linear in $\rho$).
We define the symmetrized occupancy measure $\bar{\rho}^*$ by averaging over the Haar measure of the compact group $G$:
\begin{equation*}
\bar \rho^*(s,a,z) := \int_{G} \rho^{\pi_g^*}(s,a,z) ,d\mu(g).
\end{equation*}
Since every $\rho^{\pi_g^*}$ is optimal, by convexity, $\bar \rho^*$ is also optimal. We verify that $\bar \rho^*$ is group-invariant. 
Using $\rho^{\pi_g^*}(s,a,z) = \rho^{\pi^*}(g^{-1}s,g^{-1}a,g^{-1}z)$:
\begin{align*}
\bar \rho^*(gs,ga,gz)&= \int_{G} \rho^{\pi^*}(h^{-1}gs,h^{-1}ga,h^{-1}gz) ,d\mu(h) \\
&= \int_{G} \rho^{\pi^*}(k^{-1}s,k^{-1}a,k^{-1}z) ,d\mu(k) \quad  \\
&\qquad\qquad\qquad\qquad\qquad\qquad(\text{subst.}~ k=h^{-1}g) \\
&= \bar \rho^*(s,a,z).
\end{align*}
We define the equivariant policy $\bar \pi^*$ induced by $\bar \rho^*$ as $\bar \pi^*(a \mid s,z) \propto \bar \rho^*(s,a,z)$. Since $\bar \rho^*$ is optimal, $\bar \pi^*$ is an optimal equivariant policy.\\

\textbf{3. Symmetrizing the Scoring Function.}\quad 
Since $\bar{\pi}^*$ is optimal, it maximizes the WDM objective. Let $\tilde{f}$ be any optimal scoring function for $\bar{\pi}^*$, i.e., $J(\bar{\pi}^*, \tilde{f}) = \wdm(\bar{\pi}^*)$. Since $\bar{\pi}^*$ is equivariant, the objective $J(\bar{\pi}^*, \cdot)$ is invariant under the rotation of $f$. Thus, $J(\bar{\pi}^*, \tilde{f}_g) = J(\bar{\pi}^*, \tilde{f})$ for all $g$. We define the group-averaged scoring function:
\begin{equation*}
    \bar f^*(s,z) := \int_{G} \tilde{f}_g(s,z), d\mu(g).
\end{equation*}
By construction, $\bar f^*$ is group-invariant and $1$-Lipschitz (by convexity of the Lipschitz ball). By the linearity of $J$ with respect to $f$:
\begin{align*}
J(\bar{\pi}^*, \bar f^*) = \int_{G} J(\bar{\pi}^*, \tilde{f}_g) , d\mu(g) = \wdm(\bar{\pi}^*).
\end{align*}
\\
\textbf{Conclusion.}\quad
The pair $(\bar{\pi}^*, \bar{f}^*)$ constructed above satisfies:\begin{enumerate}\item \textbf{Global Optimality:} $J(\bar{\pi}^*, \bar{f}^*) = J(\pi^*, f^*)$.\item \textbf{Symmetry:} $\bar{\pi}^*$ is equivariant and $\bar{f}^*$ is group-invariant.
\end{enumerate}
\end{proof}

\subsection{Proof of Proposition~\ref{prop:gwdm}}
\label{app:gwdm}
\theoremRestate{thm:eq-opt}
\begin{propT}
    If the distance metric $d$ is group-invariant, then the
    group-invariant WDM satisfies
    \[
    I_{\mathcal W}^{G}(g\mathcal S;gZ) = I_{\mathcal W}^{G}(\mathcal S;Z)
    \]
    for all $g\in G$.
\end{propT}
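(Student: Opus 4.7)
The plan is to prove the identity by a change-of-variables argument that exploits both the measure-preserving property of the group action and the diagonal invariance built into $\mathcal{F}_{G}$. Interpreting $g\mathcal{S}$ and $gZ$ as the images of $\mathcal{S}$ and $Z$ under the pushforward of $g$, the measures $p(gs, gz)$, $p(gs)$, and $p(gz)$ are defined by $\int h(s',z')\, dp_{g\mathcal{S}, gZ}(s',z') = \int h(gs, gz)\, dp_{\mathcal{S}, Z}(s,z)$ for any measurable test function $h$, and analogously for the marginals.

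First I would write out
\[
I_{\mathcal{W}}^{G}(g\mathcal{S}; gZ) = \sup_{f \in \mathcal{F}_{G}} \left( \mathbb{E}_{p_{g\mathcal{S}, gZ}}[f(s',z')] - \mathbb{E}_{p_{g\mathcal{S}}p_{gZ}}[f(s',z')] \right),
\]
apply the pushforward identity to each expectation so that the integrand becomes $f(gs, gz)$ under the original laws $p(s,z)$ and $p(s)p(z)$, and then invoke the defining property of $\mathcal{F}_{G}$, namely $f(gs, gz) = f(s,z)$. Both expectations collapse exactly to $\mathbb{E}_{p(s,z)}[f(s,z)]$ and $\mathbb{E}_{p(s)p(z)}[f(s,z)]$. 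Since the feasible class $\mathcal{F}_{G}$ is intrinsically defined (by the Lipschitz bound with respect to the invariant metric $d$ together with invariance under the diagonal action, both independent of the distribution), the same supremum appears on both sides and we recover $I_{\mathcal{W}}^{G}(\mathcal{S}; Z)$.

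The main obstacle is notational rather than technical: one must fix a precise interpretation of the symbols $g\mathcal{S}$ and $gZ$ and of the measures placed on them. Once the pushforward convention is adopted, the group-invariance of $d$ plays the supporting role of ensuring that $\mathcal{F}_{G}$ can be unambiguously regarded as a single function class regardless of which copy of $\mathcal{S}\times Z$ we evaluate it on — a point that is taken for granted in Definition~\ref{def:gwdm} but worth stating explicitly. I do not expect any further complications; the computation is essentially a single change of variables combined with the invariance axiom.
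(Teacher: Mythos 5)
Your proposal is correct and follows essentially the same route as the paper's proof: both arguments push the group action onto the integrand via a measure-preserving change of variables (the paper phrases this through the left regular representation $(L_g p)(s,z)=p(g^{-1}s,g^{-1}z)$, you through the pushforward convention, which is the same thing) and then collapse $f(gs,gz)$ to $f(s,z)$ using the defining invariance of $\mathcal{F}_{G}$, noting that the feasible class is unchanged so the suprema coincide. Your explicit remark that the invariance of $d$ is what makes $\mathcal{F}_{G}$ a single well-defined class on both sides is a fair clarification of a point the paper leaves implicit, but it does not change the argument.
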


\begin{proof}
We analyze the objective $I_{\mathcal W}^{G}(\mathcal S;Z)$ defined as the supremum over the function class $\mathcal{F}_{G}$ (Definition~\ref{def:gwdm}).
Consider the objective evaluated on the jointly transformed variables $(g\mathcal S, gZ)$:
\begin{align*}
&I_{\mathcal W}^{G}(g\mathcal S; gZ)\\
&= \sup_{f \in \mathcal{F}_{G}} \left (
\mathbb{E}_{p(g^{-1}s,g^{-1}z)}[f(s,z)]
- \mathbb{E}_{p(g^{-1}s)p(g^{-1}z)}[f(s,z)] \right )
\end{align*}
where the expectation is taken with respect to the density $L_g p$, i.e., $(L_g p)(s,z):=p(g^{-1}s,g^{-1}z)$ is the left regular representation of $G$ on densities (see Appendix.~\ref{app:left-regular}).
For the first expectation, we write:
\begin{equation*}
    \mathbb{E}_{p(g^{-1}s,g^{-1}z)}[f(s,z)]
    = \int_{\mathcal S\times Z} f(s,z)\, p(g^{-1}s,g^{-1}z)\, ds\,dz.
\end{equation*}
We perform the change of variables $u = g^{-1}s$, $v = g^{-1}z$ (so $s = gu$, $z = gv$). Since the action is volume-preserving ($ds\,dz = du\,dv$):
\begin{align*}
    \int_{\mathcal S\times Z} f(s,z)\, &p(g^{-1}s,g^{-1}z)\, ds\,dz \\
    &= \int_{\mathcal S\times Z} f(gu,gv)\, p(u,v)\, du\,dv.
\end{align*}

{Crucially, since $f \in \mathcal{F}_{G}$, we have $f(gu, gv) = f(u, v)$ by definition.} Thus:
\begin{align*}
    &= \int_{\mathcal S\times Z} f(u,v)\, p(u,v)\, du\,dv \\
    &= \mathbb{E}_{p(u,v)}[f(u,v)].
\end{align*}
An analogous calculation for the product of marginals:
\begin{align*}
\mathbb{E}_{p(g^{-1}s)p(g^{-1}z)}[f(s,z)]
&= \int f(s,z)\,p(g^{-1}s)\,p(g^{-1}z)\,ds\,dz \\
&= \int f(gu,gv)\,p(u)\,p(v)\,du\,dv \\
&= \mathbb{E}_{p(u)p(v)}[f(gu,gv)].
\end{align*}

Substituting these back into the supremum:
\begin{align*}
   I_{\mathcal W}^{G}&(g\mathcal S; gZ) \\
   &= \sup_{f \in \mathcal{F}_{G}}
      \Big( \mathbb{E}_{p(s,z)}[f(s,z)] - \mathbb{E}_{p(s)p(z)}[f(s,z)] \Big) \\
   &= I_{\mathcal W}^{G}(\mathcal S;Z).
\end{align*}
\end{proof}

\subsection{Proof of Proposition~\ref{prop:group-avg}}
\label{app:group-avg}

\propRestate{prop:group-avg}

\begin{propT}[Properties of the group-averaged scoring function]
Let $f:\mathcal S\times Z\to \mathbb R$ be $1$-Lipschitz with respect to
a group-invariant metric $d$ on $\mathcal S\times Z$. Then its group
average $\tilde f$ is group-invariant and $1$-Lipschitz with respect to
$d$. In particular, $\tilde f \in \mathcal{F}_{G}$.
\end{propT}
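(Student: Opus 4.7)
The proposition has two parts: group-invariance of $\tilde f$ and preservation of the $1$-Lipschitz constant. Both follow from routine manipulations of the Haar integral, so the plan is to cleanly separate them and use the invariance properties of $\mu$ and $d$ in each step.

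For the invariance claim, I would fix $h \in G$ and compute
\[
\tilde f(hs, hz) = \int_G f(g \cdot hs,\, g \cdot hz)\, d\mu(g).
\]
Substituting $g' = gh$ and using right-invariance of the normalized Haar measure on a compact group ($d\mu(g'h^{-1}) = d\mu(g')$), this reduces to $\int_G f(g's, g'z)\, d\mu(g') = \tilde f(s,z)$. This step is essentially bookkeeping once I invoke the correct invariance property of $\mu$; I would point to Appendix~\ref{app:haar} for that fact.

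For the Lipschitz claim, given two pairs $(s,z), (s',z') \in \mathcal S \times Z$, I would write
\begin{align*}
|\tilde f(s,z) - \tilde f(s',z')|
&= \left| \int_G \bigl( f(gs,gz) - f(gs',gz') \bigr)\, d\mu(g) \right| \\
&\le \int_G \bigl| f(gs,gz) - f(gs',gz') \bigr|\, d\mu(g) \\
&\le \int_G d\bigl((gs,gz),(gs',gz')\bigr)\, d\mu(g),
\end{align*}
where the second inequality uses the pointwise $1$-Lipschitz property of $f$. Group-invariance of $d$ then collapses the integrand to $d((s,z),(s',z'))$, and since $\mu$ is normalized, the integral equals $d((s,z),(s',z'))$. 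Combining this with the invariance part gives $\tilde f \in \mathcal{F}_G$.

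\textbf{Main obstacle.} There are no deep obstacles; the only subtlety is making sure the Haar measure manipulation uses the appropriate invariance. On a compact group the normalized Haar measure is both left- and right-invariant and inversion-invariant, so the substitution $g \mapsto gh$ (or $g \mapsto g h^{-1}$) is justified. I would state this once, refer to Appendix~\ref{app:haar}, and otherwise keep the argument short.
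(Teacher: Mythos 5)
Your proof is correct and follows essentially the same route as the paper's: a change of variables $g' = gh$ with the (right-)invariance of the normalized Haar measure for the invariance claim, and the triangle inequality for integrals combined with the $1$-Lipschitz property of $f$ and group-invariance of $d$ for the Lipschitz claim. Nothing further is needed.
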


\begin{proof}

\textbf{Group-invariance of $\tilde{f}$.} \quad
Define the group-averaged function as $\tilde{f}(s,z)=\int_{G} f(gs,gz)d\mu(g)$, where $\mu$ is the Haar measure on the compact group $G$. For any $h\in G$,
\begin{align*}
    \tilde{f}(hs,hz)&=\int_G f(hgs,hgz)d\mu(g) \\
    &= \int_G f(g's,g'z) d\mu(g') ~~(g'=gh)\\
    &= \tilde{f}(s,z).
\end{align*}
Thus, $\tilde{f}$ is group-invariant.\\ 

\textbf{Preservation of the 1-Lipschitz condition.}\quad
Assume that $f$ is 1-Lipschitz with respect to a group-invariant metric $d$, so that $d((gs_1,gz_1),(gs_2,gz_2))=d((s_1, z_1), (s_2, z_2))$. Then for any $(s_1, z_1)$ and $(s_2, z_2)$,
\begin{align*}
    &\|\tilde{f}(s_1,z_1)-\tilde{f}(s_2, z_2)\| \\
    &= \left \|\int_G f(gs_1, gz_1)-f(gs_2, gz_2) d\mu(g) \right  \| \\
    &\leq \int_G \|f(gs_1, gz_1)-f(gs_2, gz_2)\| d\mu(g)\\
    &\quad\quad\quad\quad\quad\quad\quad\quad\quad\quad\quad\quad\quad\quad(\text{Jensen's inequality}) \\
    &\leq \int_G d((gs_1, gz_1), (gs_2, gz_2))d\mu(g) \\
    &\quad\quad\quad\quad\quad\quad\quad\quad\quad\quad\quad\quad(\text{1-Lipschitz property of } f ) \\
    &=\int_G d((s_1, z_1), (s_2, z_2)) ~d\mu(g)\\
    &\quad\quad\quad\quad\quad\quad\quad\quad\quad\quad\quad\quad\quad\quad(d\text{ is group-invariant}) \\
    &= d((s_1 , z_1), (s_2, z_2))
\end{align*}
Since $\mu$ is a probability measure, $\tilde{f}$ is 1-Lipschitz. \\
\end{proof}

\subsection{Invariance of the State-Skill Occupancy Measure}
\label{app:occ-invariance}

\begin{lemma}[Invariance of the State-Skill Distribution]
\label{lem:occ-invariance}
Assume the environment dynamics are group-invariant such that $P(gs'|gs, ga) = P(s'|s,a)$. Furthermore, assume the initial state distribution $p_0(s)$ and skill prior $p(z)$ are group-invariant. If the skill-conditioned policy is equivariant, i.e., $\pi(ga|gs, gz) = \pi(a|s,z)$, then the induced joint distribution at any timestep $t$ is group-invariant:
\begin{equation*}
p_t(gs, gz) = p_t(s, z), \quad \forall g \in G.
\end{equation*}
Consequently, the state-skill occupancy measure $\rho^\pi(s,z) = \frac{1}{T}\sum_{t=0}^{T-1} p_t(s,z)$ is also group-invariant.
\end{lemma}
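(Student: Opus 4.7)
The plan is to prove the invariance of $p_t(s,z)$ by induction on the timestep $t$, then deduce the invariance of $\rho^\pi$ by linearity. Since the skill $z$ is drawn once at the start of the episode and held fixed, I will factor $p_t(s,z) = p(z)\,p_t(s\mid z)$ and reduce the problem to showing that the conditional state occupancy $p_t(\cdot\mid z)$ is \emph{covariant} in the sense that $p_t(gs\mid gz) = p_t(s\mid z)$ for all $g \in G$. Combined with the assumed invariance $p(gz)=p(z)$, this immediately yields $p_t(gs,gz) = p_t(s,z)$.

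For the base case $t=0$, the initial state is independent of the skill, so $p_0(s\mid z)=p_0(s)$, and invariance follows directly from the assumed group-invariance of the initial state distribution. For the inductive step, I would write the one-step Chapman--Kolmogorov recursion
\begin{equation*}
p_{t+1}(s'\mid z) = \int\!\!\int p_t(s\mid z)\,\pi(a\mid s,z)\,P(s'\mid s,a)\,da\,ds,
\end{equation*}
and then evaluate $p_{t+1}(gs'\mid gz)$ by performing the change of variables $s = gu$, $a = gv$. Since the group action on $\mathcal S$ and $\mathcal A$ is measure-preserving (as used in Appendix~\ref{app:eq-opt}), the Jacobian is trivial and $ds\,da = du\,dv$. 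The integrand then transforms as $p_t(gu\mid gz) = p_t(u\mid z)$ by the inductive hypothesis, $\pi(gv\mid gu, gz) = \pi(v\mid u, z)$ by equivariance of the policy, and $P(gs'\mid gu, gv) = P(s'\mid u, v)$ by group-invariance of the dynamics. All three substitutions collapse the integral back to $p_{t+1}(s'\mid z)$, completing the induction.

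Finally, I would multiply by $p(z)=p(gz)$ to recover $p_{t+1}(gs,gz) = p_{t+1}(s,z)$, and then conclude invariance of $\rho^\pi(s,z) = \tfrac{1}{T}\sum_{t=0}^{T-1} p_t(s,z)$ as a finite convex combination of invariant measures. The main technical obstacle is bookkeeping around the change of variables: I must be explicit that the group acts on actions (through the equivariance assumption $\pi(ga\mid gs, gz) = \pi(a\mid s,z)$, which implicitly fixes a representation of $G$ on $\mathcal A$) and that this action is measure-preserving, so that the action-integral is genuinely invariant under $a\mapsto gv$. Everything else is a straightforward chain of three substitutions.
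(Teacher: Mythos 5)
Your proof is correct and follows essentially the same route as the paper's: induction on $t$, a change of variables over states and actions using the measure-preserving group action, and the three assumptions (invariant dynamics, equivariant policy, invariant priors) to collapse the Chapman--Kolmogorov integral, followed by linearity for $\rho^\pi$. The only cosmetic difference is that you factor out $p(z)$ and work with the conditional $p_t(s\mid z)$, whereas the paper works with the joint and first establishes invariance of the marginal transition kernel $\mathcal T(s'\mid s,z)$ as an intermediate step; the substance is identical.
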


\begin{proof}
We prove the lemma through mathematical induction.

\textbf{Base Case ($t=0$):}\\
The joint distribution at $t=0$ can be determined by the independent priors:
\[
p_0(s,z) = p_0(s)p(z).
\]
When applied by the group transformation $g\in G$,
\[
p_0(gs,gz) = p_0(gs)p(gz) = p_0(s)p(z)=p_0(s,z).
\]
\\
\textbf{Inductive step:}
Assume the hypothesis holds for time $t$, i.e., $p_t(gs,gz)=p_t(s,z)$. 
The marginal transition probability from $(s,z)$ to $s'$ under policy $\pi$ is:
\[
\mathcal T(s'\mid s,z) = \int_{\mathcal A}P(s'\mid s,a) \pi(a\mid s,z)\, da.
\]
First, we show that this probability is group-invariant. Consider the transition from rotated inputs $gs,gz$ to $gs'$:
\begin{align*}
    \mathcal{T}(gs' \mid gs, gz) &= \int_{\mathcal{A}} P(gs' \mid gs, a) \pi(a \mid gs, gz) \, da. \\
    &= \int_{\mathcal A}P(gs'\mid gs, g\bar a) \pi(g\bar a\mid gs, gz)\, d\bar a \\
    & \quad\quad\quad\quad\quad\quad\quad(\text{change of variable}~~a=g\bar a)\\
    &=\int_{\mathcal A} P(s'\mid s,\bar a)\pi(\bar a\mid s,z)\, d\bar a \\
    &=\mathcal T(s'\mid s,z).
\end{align*}
where we use the change of variable $a = g\bar a$, and $g\in G$ is a volume-preserving group transform, $da = d\bar a$
Now, we express $p_{t+1}(s',z)$ by marginalizing over the previous state $s$:
\[
p_{t+1}(s',z) = \int_{\mathcal S} \mathcal T(s'\mid s,z)p_t (s,z)\, ds.
\]
At transformed pair $(gs',gz)$:
\[
p_{t+1}(gs',gz) = \int_{\mathcal S} \mathcal T(gs'\mid s,gz)p_t(s,gz)\, ds.
\]
We again use the change of variable: $s= g\bar s$, $ds = d\bar s$:
\begin{align*}
p_{t+1}(gs',gz) &= \int_{\mathcal S} \mathcal T(gs'\mid g\bar s, gz)p_t(g\bar s, gz)\, d\bar s \\
&=\int_{\mathcal S} \mathcal T(s'\mid \bar s, z)p_t (\bar s, z)\, d\bar s \\
&=p_{t+1}(s',z).
\end{align*}
By induction, $p_t(gs,gz)=p_t(s,z)$ for all $t$. 
Therefore, $\rho^\pi(s,z)$ is also group-invariant.
\end{proof}

\subsection{Group-invariant Temporal Distance under Group-Invariant MDPs}
\label{gtd}
The temporal distance $d_T(s_1,s_2)$ between two states $s_1$ and $s_2$ is defined as the expected number of transitions needed to reach state $s_2$ from state $s_1$ under policy $\pi$ \cite{durugkar2021adversarial}. It satisfies the following recursive relationship: 
\begin{equation*}
d_T(s_1, s_2) =
\begin{cases}
0 & \text{if } s_1 = s_2,\\[4pt]
1 + \mathbb{E}_{a, s' \mid s_1}\big[d_T(s', s_2)\big] & \text{if } s_1 \neq s_2.
\end{cases}
\end{equation*}
For the base case, if $s_1=s_2$, it leads to $d_T(gs_1, gs_2)=0$, since the group action is assumed to be bijective, $gs_1=gs_2$. So, the group-invariance is satisfied in the base case:
\begin{equation*}
    d_T(gs_1, gs_2)=d_T(s_1,s_2)=0
\end{equation*}
For the inductive step, assume that the group-invariance holds for states closer to $s_2$, $d_T(gs',gs_2)=d_T(s',s_2)$. Then from the recursive term,
\begin{equation*}
    d_T(gs_1,gs_2)=1+\mathbb{E}_{a\sim \pi(\cdot|gs_1)}\mathbb{E}_{s'\sim P(\cdot|gs_1,a)}[d_T(s', gs_2)]
\end{equation*}
By the Lemma from \cite{wang2022equivariant} that $gS=S$ and $gA=A$, we can set $a=g\tilde a$ and $s'=gs''$. Then, in the group-invariant MDPs, $\pi(g\tilde a|gs_1)=\pi(\tilde a|s_1)$ and $P(gs''|gs_1, g\tilde a)=P(s''|s_1, \tilde a)$. Substituting these into the equation,
\begin{equation*}
    d_T(gs_1,gs_2)=1+\mathbb{E}_{\tilde a\sim \pi(\cdot|s_1)}\mathbb{E}_{s''\sim P(\cdot|s_1,\tilde{a})}[d_T(gs'', gs_2)]
\end{equation*}
By the inductive hypothesis, $d_T(gs'',gs_2)=d_T(s'',s_2)$. Therefore,
\begin{align*}
    d_T(gs_1,gs_2)&=1+\mathbb{E}_{\tilde a\sim \pi(\cdot|s_1)}\mathbb{E}_{s''\sim P(\cdot|s_1,\tilde{a})}[d_T(s'', s_2)] \\
    &=1+\mathbb{E}_{a\sim \pi(\cdot|s_1)}\mathbb{E}_{s'\sim P(\cdot|s_1,a)}[d_T(s', s_2)]
\end{align*}
where its equation is equivalent to the recursive definition of $d_T(s_1, s_2)$. Thus, by the mathematical induction, temporal distance is group-invariant under the group-invariant MDPs, $d_T(gs_1,gs_2)=d_T(s_1, s_2)$. \\

\subsection{Proof of Theorem~\ref{theorem:semiMDP}}
\label{gsmdp}

In this section, we prove that if the underlying MDP is group-invariant and the low-level skill policy is equivariant, the induced state transition probability in the high-level Semi-MDP (defined over a fixed interval $k$) retains group invariance.

Let $G$ be a compact group acting on the state space $\mathcal S$ and action space $\mathcal A$. We denote the transition probability of the underlying MDP as $P^l(s' \mid s, a)$. Group-invariance of the dynamics implies that $P^l(gs' \mid gs, ga) = P^l(s' \mid s, a)$ for all $g \in G$. Furthermore, let $\pi(a \mid s, z)$ be the low-level skill policy. Equivariance of the policy implies $\pi(ga \mid gs, gz) = \pi(a \mid s, z)$. We define $P_k(s_{t+k} \mid s_t, z)$ as the transition probability of the Semi-MDP, representing the likelihood of reaching state $s_{t+k}$ from $s_t$ after executing skill $z$ for $k$ steps.

\theoremRestate{theorem:semiMDP} 
\begin{theoremR}[Group-invariant Fixed-interval Semi-MDP] 
Assume the environment dynamics are group-invariant and the pretrained skill policy is equivariant: \begin{align*} 
P^l(gs' \mid gs, ga) &= P^l(s' \mid s, a) \\ 
\pi^l(ga \mid gs, gz) &= \pi^l(a \mid s, z) \quad \forall g \in G. 
\end{align*} 
Then, the induced high-level transition kernel (Eq.~(\ref{eq:Pk-def})) satisfies group invariance: \begin{equation*} 
P_k(g s_{t+k} \mid g s_t, g z) = P_k(s_{t+k} \mid s_t, z), \qquad \forall g \in G. 
\end{equation*} 
Consequently, the fixed-interval semi-MDP is group-invariant whenever the high-level reward is group-invariant. 
\end{theoremR}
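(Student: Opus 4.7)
The plan is to prove invariance of the induced high-level kernel $P_k$ by induction on the interval length $k$, and then to immediately deduce semi-MDP invariance from the assumed invariance of the high-level reward. This mirrors the inductive strategy already used in Lemma~\ref{lem:occ-invariance} and Appendix~\ref{gtd}, so much of the machinery is already in place.

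For the base case $k=1$, I would start from the one-step induced kernel $P_1(s'\mid s, z)=\int_{\mathcal A}\pi^l(a\mid s,z)P^l(s'\mid s,a)\,da$, evaluate it at $(gs, gz, gs')$, and perform the substitution $a = g\tilde a$. Since the compact group $G$ acts as a measure-preserving bijection on $\mathcal A$ (the standing assumption also used in Appendix~\ref{gtd}), we have $da = d\tilde a$. Combining the equivariance of $\pi^l$, $\pi^l(g\tilde a\mid gs, gz)=\pi^l(\tilde a\mid s,z)$, with the invariance of $P^l$, $P^l(gs'\mid gs, g\tilde a)=P^l(s'\mid s,\tilde a)$, the integrand collapses to $\pi^l(\tilde a\mid s, z)P^l(s'\mid s, \tilde a)$, recovering $P_1(s'\mid s, z)$.

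For the inductive step, I would prefer a Chapman--Kolmogorov-type decomposition $P_k(s_{t+k}\mid s_t, z) = \int P_1(s_{t+1}\mid s_t, z)\,P_{k-1}(s_{t+k}\mid s_{t+1}, z)\,ds_{t+1}$, then evaluate at $(gs_t, gz, gs_{t+k})$ and perform the substitution $s_{t+1}=g\tilde s_{t+1}$. Invariance of each factor (the base case for $P_1$ and the inductive hypothesis for $P_{k-1}$) together with measure preservation of $G$ on $\mathcal S$ then reduces the transformed integral to $P_k(s_{t+k}\mid s_t, z)$. An equivalent but more notation-heavy route is to expand Eq.~(\ref{eq:Pk-def}) in full and simultaneously substitute $s_{t+i}=g\tilde s_{t+i}$ and $a_{t+i}=g\tilde a_{t+i}$ for every intermediate index, absorbing each $g$ by equivariance/invariance of the corresponding factor. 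The recursive route is cleaner and directly reuses the base case.

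The main obstacle is purely bookkeeping: verifying that each change of variables is carried out on the correct domain with the correct Haar measure, so that all group factors cancel cleanly across the nested integrals. This reduces to the standing assumption that $G$ is compact and acts via measure-preserving bijections on $\mathcal S$ and $\mathcal A$, which the paper has already invoked repeatedly. Once invariance of $P_k$ is established, the tuple $(\mathcal S, Z, P_k, R_k, G)$ inherits group-invariance directly from the assumed invariance of the downstream reward $R_k$ (which typically depends only on invariants such as distance to a goal), placing it in the group-invariant MDP framework of Section~\ref{gmdp} and completing the claim.
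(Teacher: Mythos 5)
Your proposal is correct and rests on exactly the same mechanism as the paper's proof: a change of variables $s\mapsto g\hat s$, $a\mapsto g\hat a$ on the intermediate trajectory, justified by the measure-preserving group action, followed by absorbing each $g$ via the equivariance of $\pi^l$ and the invariance of $P^l$. The only difference is packaging: the paper expands Eq.~(\ref{eq:Pk-def}) in full and substitutes all intermediate variables simultaneously in one nested integral (the route you describe as the ``more notation-heavy'' alternative), whereas you prefer to establish the one-step case $P_1$ and then induct via the Chapman--Kolmogorov factorization $P_k(s_{t+k}\mid s_t,z)=\int P_1(s_{t+1}\mid s_t,z)\,P_{k-1}(s_{t+k}\mid s_{t+1},z)\,ds_{t+1}$, which is valid here because $z$ is held fixed over the interval. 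Your inductive version is marginally cleaner and reuses the base case, and it parallels the style of Lemma~\ref{lem:occ-invariance}; the paper's one-shot substitution avoids having to state the intermediate factorization. Either way the argument goes through, and your concluding step—deducing group-invariance of the semi-MDP from the assumed invariance of the high-level reward—matches the paper's.
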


\begin{proof} The probability of reaching state $s_{t+k}$ from $s_t$ under skill $z$ is obtained by marginalizing over all possible intermediate trajectories of actions $a_{t:t+k-1}$ and states $s_{t+1:t+k-1}$. Let $\tau_{int}$ denote the set of these intermediate variables. We write the transition density as:
\begin{align*}
    &P_k(s_{t+k} \mid s_t, z) \\
    &= \int \dots \int \prod_{i=0}^{k-1} \bigg[ 
        \pi(a_{t+i} \mid s_{t+i}, z) \cdot \\
        &\quad\quad\quad\quad\quad\quad\quad\quad\quad\quad P^l (s_{t+i+1} \mid s_{t+i}, a_{t+i}) 
    \bigg] \, d\tau_{int}.
\end{align*}

Now, consider the transition probability under the transformed inputs $gs_t$ and $gz$. Let the integration variables representing the intermediate trajectory be denoted by the primed variables $s'_{t+i}$ and $a'_{t+i}$:
\begin{align*}
    &P_k (g s_{t+k} \mid g s_t, gz) \\
    &= \int \dots \int \prod_{i=0}^{k-1} \bigg[ 
        \pi(a'_{t+i} \mid s'_{t+i}, g z) \cdot \\
        &\quad\quad\quad\quad\quad\quad\quad\quad\quad\quad P^l(s'_{t+i+1} \mid s'_{t+i}, a'_{t+i}) 
    \bigg] \, d\tau'_{int}.
\end{align*}

We perform a change of variables for the entire trajectory. Let $s'_{t+i} = g\hat{s}_{t+i}$ and $a'_{t+i} = g\hat{a}_{t+i}$. Since the group action is volume-preserving (unit Jacobian determinant for compact groups), the differential element remains unchanged, $d\tau'_{int} = d\hat{\tau}_{int}$. Substituting these variables and applying the group-invariance of $P^l$ and the equivariance of $\pi$:
\begin{align*} 
    &P_k(g s_{t+k} \mid g s_t, g z) \\
    &= \int \dots \int \prod_{i=0}^{k-1} \bigg[ 
        P^l(g\hat{s}_{t+i+1} \mid g\hat{s}_{t+i}, g\hat{a}_{t+i}) \\
    &\quad \qquad \qquad\qquad\qquad\qquad\qquad \cdot \pi(g\hat{a}_{t+i} \mid g\hat{s}_{t+i}, gz) 
    \bigg] \, d\hat{\tau}_{int} \\ 
    &= \int \dots \int \prod_{i=0}^{k-1} \bigg[ 
        P^l(\hat{s}_{t+i+1} \mid \hat{s}_{t+i}, \hat{a}_{t+i}) \\
    &\quad \qquad \qquad \qquad\qquad\qquad\qquad\cdot \pi(\hat{a}_{t+i} \mid \hat{s}_{t+i}, z) 
    \bigg] \, d\hat{\tau}_{int} \\
    &= P_k(s_{t+k} \mid s_t, z).
\end{align*}
Thus, the Semi-MDP transition dynamics are group-invariant.
\end{proof}

\section*{Appendix B. Representation Theory Background}
\label{background}

\subsection{Left Regular Representation}
\label{app:left-regular}

Let $G$ be a compact group acting on a space $\mathcal X$ (e.g., $\mathcal X=\mathcal S \times Z$). The \emph{left regular representation} on functions $f:\mathcal X\to \mathbb R$ is defined as:
\[
(L_g f)(x):=f(g^{-1}x),\quad g\in G
\]
If $p(x)$ is a probability density on $\mathcal X$ with respect to group-invariant measure, the corresponding pushforward density transforms identically to the function \cite{folland2016course}:
\[
(L_g p)(x):=p(g^{-1} x).
\]
We use this convention throughout the main text (e.g., $L_g p(s,z)=p(g^{-1}s, g^{-1}z)$

\subsection{Compact Groups and Haar Measure}
\label{app:haar}
Throughout this work, we consider a compact Hausdorff topological group $G$, i.e., 
a group equipped with a Hausdorff topology such that the multiplication $g.h\mapsto gh$ and inversion $g\mapsto g^{-1}$ are continuous and the underlying topological space is compact \cite{folland2016course}.

On any locally compact group, there exists a left-invariant measure $\mu$, called a (left) Haar measure, satisfying
\[
\mu(gS)=\mu(S),\quad \forall g\in G, S\subseteq G.
\]
Such a left Haar measure is unique up to multiplication by a positive constant. 
Since $G$ is compact, we can normalize $\mu(G)=1$, so that $\int_G h(g)~d\mu(g)$ is the average of a function $h$ over the group. 
In particular, the expressions of $\int_G f(gs,gz)\ d\mu(g)$ in the main text are precisely such group averages.

\subsection{Group Fourier Transform for $SO(2)$}\label{fourier}
In our experiments we work with a finite cyclic subgroup
$G = C_N \subset SO(2)$, so we use the discrete form of the group
Fourier transform. For a finite compact group $G$, the Peter--Weyl
theorem \cite{ceccherini2008harmonic} states that the matrix entries of
all irreducible representations $\rho_j \in \hat G$ form a complete
orthonormal basis of $L^2(G)$ (with respect to the normalized counting
measure). Consequently, any $f \in L^2(G)$ admits an expansion of the
form
\begin{align*}
f(g)
&= \sum_{\rho_j \in \hat G} \sum_{m,n < d_j}
    w_{j,m,n}\,\sqrt{d_j}\,[\rho_j(g)]_{mn} \\
&= \sum_{\rho_j \in \hat G}
    \sqrt{d_j}\,\mathrm{Tr}\!\big(\rho_j(g)^\top \hat f(\rho_j)\big),
\end{align*}
where $d_j$ is the dimension of $\rho_j$, indices $m,n$ range over the
matrix entries, and $\hat f(\rho_j)\in \mathbb R^{d_j\times d_j}$ is the
matrix collecting the coefficients $w_{j,m,n} \in \mathbb R$.

For each basis element $[\rho_j(g)]_{mn}$, the associated coefficient is
obtained by projecting $f$ onto that basis function:
\begin{equation*}
w_{j,m,n}
= \frac{1}{|G|}
  \sum_{g \in G} f(g)\,\sqrt{d_j}\,[\rho_j(g)]_{mn},
\end{equation*}
where $|G|$ denotes the order of the finite group $G$. Stacking these
coefficients into a matrix yields
\begin{equation*}
\hat f(\rho_j)
= \frac{1}{|G|}
  \sum_{g\in G} f(g)\,\sqrt{d_j}\,\rho_j(g),
\end{equation*}
which we refer to as the (group) Fourier transform of $f$ at
representation $\rho_j$.

In the main text, the ``spectral'' or ``Fourier'' feature maps
$\phi_F(s)$ and $\psi_F(z)$ are precisely parametrizations of such
coefficient vectors across the irreducible representations of $G$.

\bibliographystyle{IEEEtran}
\bibliography{references}

% \newpage

% \section{Biography Section}
% If you have an EPS/PDF photo (graphicx package needed), extra braces are
%  needed around the contents of the optional argument to biography to prevent
%  the LaTeX parser from getting confused when it sees the complicated
%  $\backslash${\tt{includegraphics}} command within an optional argument. (You can create
%  your own custom macro containing the $\backslash${\tt{includegraphics}} command to make things
%  simpler here.)
 
\vspace{11pt}

\vfill

\end{document}